\documentclass{article} 
\usepackage{iclr2026_conference,times}

\usepackage{amsmath,amsfonts,bm}

\def\eqref#1{Eq.~\ref{#1}}

\def\1{\bm{1}}

\def\rmW{{\mathbf{W}}}

\def\va{{\bm{a}}}
\def\vb{{\bm{b}}}

\def\vt{{\bm{t}}}

\def\vx{{\bm{x}}}

\def\vz{{\bm{z}}}

\DeclareMathAlphabet{\mathsfit}{\encodingdefault}{\sfdefault}{m}{sl}
\SetMathAlphabet{\mathsfit}{bold}{\encodingdefault}{\sfdefault}{bx}{n}

\def\gF{{\mathcal{F}}}

\def\gT{{\mathcal{T}}}

\def\gW{{\mathcal{W}}}
\def\gX{{\mathcal{X}}}
\def\gY{{\mathcal{Y}}}
\def\gZ{{\mathcal{Z}}}

\def\sD{{\mathbb{D}}}

\def\sP{{\mathbb{P}}}

\def\sR{{\mathbb{R}}}
\def\sS{{\mathbb{S}}}

\def\sY{{\mathbb{Y}}}

\usepackage{hyperref}
\usepackage{url}
\hypersetup{
    colorlinks=true,
    linkcolor=red,
    citecolor=teal,
    urlcolor=cyan,
}

\usepackage{amsfonts}
\usepackage{nicefrac}
\usepackage{microtype}
\usepackage{xcolor}
\usepackage{adjustbox}
\usepackage{graphicx}
\usepackage{subfigure}
\usepackage{amsmath,amssymb,amsfonts}
\usepackage{amsthm}
\usepackage{mathrsfs}
\usepackage{algorithm, algorithmic}
\usepackage[algo2e]{algorithm2e}
\usepackage{multicol}
\usepackage{makecell}
\usepackage{float}
\usepackage{textcomp}
\usepackage{colortbl,booktabs}
\usepackage{soul}
\usepackage{bm}
\usepackage{bbm}
\usepackage{thm-restate}
\usepackage{multirow}
\usepackage[most]{tcolorbox}
\usepackage{wrapfig}

\newtheorem{problem}{\textbf{Problem}}

\newtheorem{theorem*}{Theorem}

\newtheorem{lemma}{Lemma}

\newtheorem{Proposition}{Proposition}

\newtheorem{Remark}{Remark}

\newcommand{\rowfont}[1]{\color{#1}}

\title{CARPRT: Class-Aware Zero-Shot \\Prompt Reweighting for \\Black-Box Vision-Language Models}

\author{
Ruijiang Dong $^{1}$\thanks{Equal contribution.},~~Zesheng Ye$^{1}$\footnotemark[1],~~Jianzhong Qi$^{1}$,~~Lei Feng$^{2,3}$,~~Feng Liu$^{1,3}$\thanks{Corresponding Author.}\\
~\textbf{Gang Niu}$^{3}$,~~\textbf{Masashi Sugiyama}$^{3,4}$ \\
$^1$University of Melbourne, $^2$Southeast University, \\ $^3$RIKEN Center for Advanced Intelligence Project, $^4$The University of Tokyo\\
\texttt{ruijdong@student.unimelb.edu.au, fenglei@seu.edu.cn}\\
\texttt{\{zesheng.ye, jianzhong.qi, feng.liu1\}@unimelb.edu.au}\\
\texttt{gang.niu.ml@gmail.com, sugi@k.u-tokyo.ac.jp}\\
}

\iclrfinalcopy 
\begin{document}

\maketitle

\begin{abstract}
   
Pre-trained \emph{vision-language models} (VLMs) enable zero-shot image classification by computing the similarity score between an image and textual descriptions, typically formed by inserting a class label (e.g., ``cat'') into a prompt (e.g., ``a photo of a'').
Since the score for a given image-class pair is sensitive to the choice of prompt, existing studies ensemble multiple prompts using a \emph{weighting vector} to aggregate scores across different prompts.
Yet, in current strategies, the weighting vector assigned to each prompt is \emph{shared} across all classes, implicitly assuming that prompts are conditionally independent of classes, which often does not hold in practice, as a prompt like ``an aerial view of'' might be apt for ``airport'' but ill-suited for ``apple''.
To address this, we propose \emph{class-aware zero-shot prompt reweighting} (CARPRT). 
This scoring scheme adjusts the weighting vector for each class label by capturing the class-specific relevance of different prompts in a \emph{training-free} manner. 
For each class label and every available prompt, we quantify their class-specific relevance by averaging image–text relevance scores over images predicted to that class under the given prompt. These estimates are then normalized to derive class-specific weights.
Evaluations on standard image classification benchmarks show that CARPRT outperforms existing class-independent reweighting methods, confirming that modeling prompt-class dependencies is crucial for effective zero-shot prediction and even broader VLM-based application settings that rely on prompt ensembling. Our code is available at~
\href{https://github.com/tmlr-group/CARPRT}{\texttt{https://github.com/tmlr-group/CARPRT}}.

\end{abstract}

\section{Introduction}
\label{sec:intro}
\emph{Vision-language models} (VLMs) have transformed how machine learning models interpret visual content by jointly leveraging visual and textual modalities. 
Models like CLIP~\citep{radford2021learning} and DeCLIP~\citep{li2021supervision} enable \emph{zero-shot image classification} by computing similarity scores between an image and textual descriptions of class labels, then predicting the label with the highest score. 
By forming textual descriptions of labels (e.g., \textit{``a photo of a [label]''}), this approach—known as \emph{prompting}—removes the need for task-specific training to recognize visual concepts.

However, these models' {\em zero-shot performance} is sensitive to the precise wording of prompts, as subtle phrasing changes can significantly alter the perceived relevance of visual features, leading to different similarity scores and classification outcomes~\citep{radford2021learning}. 
Identifying phrasings that remain effective across diverse visual concepts is challenging and often yields inconsistent results across datasets~\citep{allingham2023simple}. 
This sensitivity means that manually crafting optimal prompts for each class or dataset, while helpful for performance, becomes laborious and unreliable in large-scale settings. 
Recent work has explored using \emph{large language models}~(LLMs) to generate richer class descriptions, but this introduces heavy computational overhead, reducing the efficiency that makes zero-shot methods attractive in the first place.


This paper focuses on a more prevalent question: improving zero-shot classification when only a {\em fixed} set of {\em predefined prompts} and {\em unlabeled images} are available at inference, {\em under black-box access}, which requires methods that leverage only the inference data to {\em optimize prompt utilization}. 
A common strategy is \emph{prompt ensembling}, which averages embeddings of multiple prompts to produce stable class representations~\citep{radford2021learning}. 
However, this approach assumes equal prompt contributions—a simplification that harms downstream performance when misaligned templates are included. 
\citet{allingham2023simple} advanced this concept by determining prompt-specific weights using unlabeled data, depending on how compatible each prompt is with the downstream task. 
This method achieves results comparable to manually selected templates. 
Still, while such methods vary weights across prompts, they assign {\em the same weight across all classes} to each prompt.

We argue that this class-agnostic reweighting is suboptimal.
Intuitively, different semantic classes vary in their affinity to different prompts.
For example, a prompt like ``\textit{This is a photo of a [label], a type of fruit}'' is more relevant to class ``\textit{strawberry},'' but ill-suited for class ``\textit{lamb}'', which would better match ``\textit{This is a photo of a [label], a type of animal}'' instead.
This implies that optimal prompt utilization may require class-specific considerations.
To validate this intuition, we conduct controlled proof-of-concept experiments on the Flower102 dataset~\citep{nilsback2008automated}~(Fig.~\ref{fig:motivation}).
By applying Weighted Prompt Ensembling~(WPE)~\citep{allingham2023simple} {\em independently} to images of each class (thus simulating ``perfect'' class-specific knowledge for weight estimation), we observe accuracy gains compared to global WPE that estimates a single set of class-agnostic weights (Fig.~\ref{fig:acc}).
Moreover, the optimal prompt weights vary across classes\footnote{The prompt templates denoted in Fig.~\ref{fig:weight} are: 
P1 = ``a photo of a {}, a type of flower.'', 
P2 = ``satellite photo of {}.'', 
P3 = ``a close-up photo of the {}.'', 
P4 = ``a drawing of a {}.'' } (Fig.~\ref{fig:weight}), rather than being globally shared.

\begin{figure*}[!t]
    \centering
    \subfigure[
        \footnotesize
        Accuracy comparison between class-agnostic WPE and class-specific WPE.
    ]{
        \label{fig:acc}
        \includegraphics[width=0.4\linewidth]{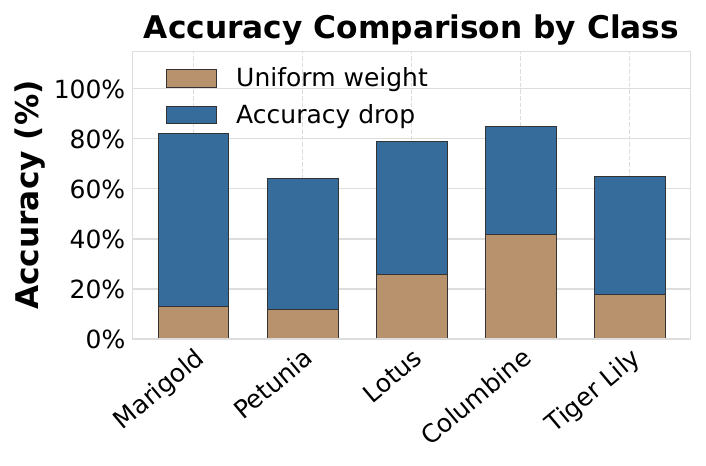}
    }\hfill
    \subfigure[
        \footnotesize Optimal weight comparison between class-specific WPE and class-agnostic WPE.
    ]{
        \label{fig:weight}
        \includegraphics[width=0.4\linewidth]{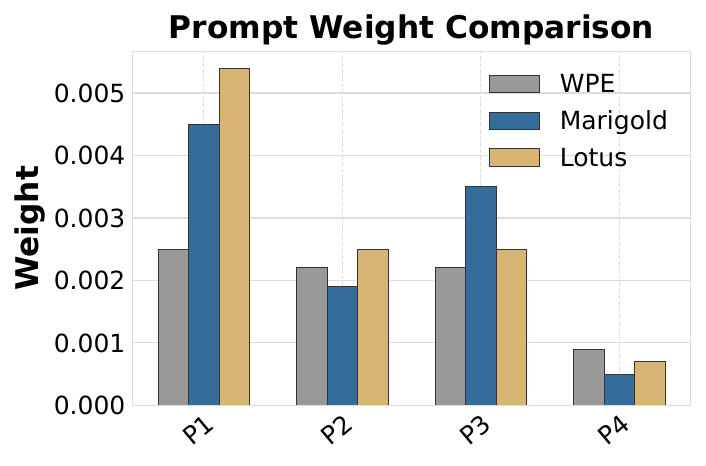}
    }\hfill
    \caption{\footnotesize
        Empirical motivation for class-specific weighting on Flower102~\citep{nilsback2008automated}.
        We showcase the results of five classes by shifting from class-agnostic WPE to class-specific WPE (using ground-truth labels), and the estimated optimal weights under two weighting schemes, confirming that optimal prompt weights are class-dependent. 
    }
\vspace{-2em}
\label{fig:motivation}
\end{figure*}


We further study this observation theoretically and present a probabilistic framework (Sec.~\ref{sec:theore}) to clarify the underlying mechanism of prompt ensembling.
We show that class-agnostic weighting schemes, such as WPE, indeed implicitly assume conditional independence between the class label and the prompt weights given an image.
This assumption, however, may not always reflect real-world data characteristics and limit the expressivity of such weighting schemes as a result.

Building on insights, we introduce \emph{\textbf{C}lass-\textbf{A}ware Zero-shot \textbf{P}rompt \textbf{R}eweigh\textbf{T}ing}~(CARPRT), a {\em training-free} method to infer class-specific prompt weights using only unlabeled images.
Unlike our controlled proof-of-concept experiment, CARPRT does {\em not} require ground-truth labels for weight estimation.
Instead (Sec.~\ref{sec:method}), for each image, CARPRT first calculates similarity scores against all possible prompt-class combinations using a pre-trained VLM (e.g., CLIP~\citep{radford2021learning}) via {\em forward inference only}. 
It then assigns a pseudo-class label to the image based on the combination yielding the highest score.
These pseudo-labels are then used to aggregate information for class-specific weight derivation: for each class, the weight for a given prompt is determined by the maximum similarity that prompt achieves in conjunction with that (pseudo-)class across the reference images.
This scheme helps tailor the prompt ensemble to the semantic content of each category.

We empirically evaluate CARPRT on ten fine-grained {\em zero-shot classification} benchmarks~(Sec.~\ref{sec:exp}), ImageNet~\citep{russakovsky2015imagenet} (and its variants), and explore its utility in broader VLM-based adaptation scenarios such as prompt tuning~(App.~\ref{Asec:add}).
Our results show that CARPRT consistently outperforms existing prompt ensembling/reweighting schemes across VLM architectures and backbones, highlighting that incorporating class-awareness is a way to maximize the potential of prompt ensembling for zero-shot classification, with potential benefits for a wide range of VLM applications.

\section{Problem Setting and Related Work}
\label{sec:pre}

\textbf{Zero-Shot Prediction with VLM}.
VLMs such as CLIP~\citep{radford2021learning} achieve visual-text alignment through large-scale contrastive pre-training.
It consists of an image encoder $f: \gX \to \gZ$ and a text encoder $g: \gT \to \gZ$, mapping images from space $\gX$ and texts from space $\gY$ into a shared embedding space $\gZ$.
The alignment is driven by maximizing the cosine similarity between the embeddings of matched image-text pairs while minimizing it for non-matched pairs.

This alignment enables {\em zero-shot image classification}.
For a set of $C$ classes $\gY = \{ y_1, \dots, y_C \}$, each class $y_c$ is mapped to a text description $\vt_{c}$ via a prompt template $p: \gY \to \gT$, such as $\vt_{c} = $``\textit{A photo of }$\{y_c\}$.''.
The text encoder $g(\cdot)$ then produces class embeddings $\vz^{\rm T} = [\vz^{\rm T}_{1} \; \vz^{\rm T}_{2} \, \cdots \, \vz^{\rm T}_{C}]^\top$ where $\vz^{\rm T}_{c} = g( \vt_{c})$ for $c \in \{1, \dots, C\}$.
Given an image $\vx \in \gX$ with its embedding $\vz^{\rm I} = f(\vx)$, the predicted class is given by $\hat{y} = \arg \max\nolimits_{c \in \{1, \dots, C \}} \text{sim} \left( \vz^{\mathrm{I}}, \vz^{\mathrm{T}}_{c} \right)$, i.e.,
one whose text embedding $\vz_c^{\rm T}$ has the highest cosine similarity with $\vz^{\rm I}$.
This allows for zero-shot classification based on semantic alignment without task-specific fine-tuning.
Yet, the classification performance is highly sensitive to the choice of prompt template $p$. An ill-suited template can lead to misaligned class embeddings.

This work focuses on mitigating this sensitivity by ensembling {\em multiple \textbf{predefined} templates} $\sP = \{p_1, \dots, p_n\}$, particularly when $\sP$ is fixed, without relying on additional labeled data.
That is, in the {\em zero-shot classification} setting, we consider the following problem\footnote{We note that there are some VLM adaptation settings, e.g., prompt tuning~\citep{zhou2022conditional,zhou2022learning}, which are not the focus of this work. To clarify, App.~\ref{Asec:problem_setup_more} details the relationship between Problem~\ref{pro:pe} and other settings.}:
\begin{problem} [Prompt Ensembling]
\label{pro:pe}

Given a pre-trained VLM with an image encoder $f$ and a text encoder $g$, a label space $\gY$ with $C$ classes, a fixed prompt template set $\sP$ with $|\sP| = n$, and an unlabeled image dataset $\sD = \{\vx_1, \dots, \vx_m\}$, construct the class embeddings $\vz^{\rm T}$ using a prompt weight matrix $\rmW \in \sR^{n \times C}$, where each row $\rmW_c = [w_{1,c}, \dots, w_{n,c}]^\top$ refers to weights of $n$ prompts for class $y_c \in \gY$, subject to $w_{i,c} \geq 0$ and $\sum_{i=1}^n w_{i,\cdot} = 1$.
The text embeddings for class $y_c$ are thus
\begin{equation}\label{eq:cpw_prelim}
    \begin{bmatrix}
        \vz^{\rm T}_{1} \\ \vdots \\ \vz^{\rm T}_{C}
    \end{bmatrix}
    = \frac{1}{n} \left(
    \begin{bmatrix}
            \vz^{\rm T}_{1, 1} & \vz^{\rm T}_{2, 1} & \cdots & \vz^{\rm T}_{n, 1} \\
            \vdots             & \vdots             & \ddots & \vdots             \\
            \vz^{\rm T}_{1, C} & \vz^{\rm T}_{2, C} & \cdots & \vz^{\rm T}_{n, C}
        \end{bmatrix}
    \cdot
    \begin{bmatrix}
            w_{1, 1} & w_{1, 2} & \cdots & w_{1, C} \\
            \vdots   & \vdots   & \ddots & \vdots   \\
            w_{n, 1} & w_{n, 2} & \cdots & w_{n, C}
        \end{bmatrix} \right).
\end{equation}
where $\vz^{\rm T}_{i,c} = g(p_i(y_c))$ is the text embedding for class $y_c$ under prompt $p_i$.
The objective is then to find the set of all such weight vectors $\rmW = \{ \rmW_c \}_{c=1}^C$ that would (ideally) minimize the empirical zero-shot classification error over the unlabeled dataset $\sD$, i.e., correctly predict the (unknown) ground-truth label $y_j$ by $\hat{y}_j$ for each $\vx_j \in \sD$.

\end{problem}


Existing {\em prompt ensembling} schemes can be viewed as constrained versions of the general formulation in Problem~\ref{pro:pe}, differing primarily in how they determine the prompt weights $\rmW$.

\paragraph{Mean Prompt Ensembling (MPE) as a Solution.}
The most straightforward approach, MPE~\citep{radford2021learning}, averages text embeddings from multiple prompts, equivalently setting $w_{i,c} = 1$ for all prompts $p_i$ and classes $y_c$ in \eqref{eq:cpw_prelim}, such that $\rmW$ reduces to an {\em all-ones} matrix.
MPE seeks to improve robustness over single-prompt usage by diversifying textual inputs.
Yet, treating all prompts equally can impair the efficacy if $\sP$ is semantically misaligned with the downstream task $\sD$.


\paragraph{Weighted Prompt Ensembling (WPE) as a Solution.}
To mitigate the impact of task-irrelevant prompts, WPE~\citep{allingham2023simple} (originally termed ZPE) extends MPE by assigning data-driven weights to the prompts.
WPE assesses whether a prompt $p_i$ yields generally high similarity scores over all classes with samples of $\sD$, and up-weights more relevant ones.
Each prompt $p_i$ is assigned a weight via $w_{i, :} = \frac{1}{m} \sum_{j=1}^m \max_{c \in \{1, \dots, C\}} \text{sim}(\vz_j^{\rm I}, \vz^{\rm T}_{i,c})$, which, after normalization, is applied uniformly across classes $w_{i, 1} = w_{i, 2} = \cdots = w_{i, C}$.
While WPE can down-weight  unhelpful prompts, it still assumes: a prompt deemed useful (or not) is considered so for all classes {\em equally}.
\paragraph{Can We Bridge the Gap?}
As Fig.~\ref{fig:motivation} shows, a prompt's efficacy often depends on the specific class it describes.
Both MPE and WPE largely {\em neglect} this class-prompt interaction, nor attempt to understand {\it why} class specificity is necessary to determine prompt relevance and {\it how} statistical tools help to address it.
To bridge this gap, we next present a probabilistic framework, establishing a principled connection between {\em class-aware prompt reweighting} and {\em zero-shot classification}.

\section{Understanding Prompt Reweighting: a Probabilistic Viewpoint}
\label{sec:theore}

Zero-shot classification with VLMs can be framed as estimating the conditional probability $\Pr(y^* | \vx^*, \sP, \sD)$ of a label $y^*$ given a query image $\vx^*$, a set of prompts $\sP$, and an unlabeled dataset $\sD$.
To understand how prompt reweighting influences this process, we develop a probabilistic framework that reveals why class-aware reweighting is necessary.

Let $\rmW \in \gW$ be a weight matrix.
We begin by marginalizing over the weight space \( \gW \) as
\begin{equation}\label{eq:objective}
    \begin{aligned}
        \Pr(y^* | \vx^*, \sP, \sD) & = \int_{\gW} \Pr(y^* | \vx^*, \sP, \sD, \rmW) \Pr(\rmW | \vx^*, \sP, \sD) {\rm d} \rmW,
    \end{aligned}
\end{equation}
where $\Pr(\rmW | \vx^*, \sP, \sD)$ can further simplify to {\color{olive} $\Pr(\rmW | \sP, \sD)$}, since in zero-shot settings, $\rmW$ is determined before access to the new query image $\vx^*$.
This decomposition suggests two essential tasks in zero-shot classification: (i) modeling prompt weights {\color{olive} $\Pr(\rmW | \sP, \sD)$} and (ii) making aggregated predictions {\color{purple} $\Pr(y^* | \vx^*, \sP, \sD, \rmW)$} weighted by {\color{olive} $\Pr(\rmW | \sP, \sD)$}.
As such, we will continue to explore how further expansions can {\em inform and align with practical implementations}.

\paragraph{Modeling {\color{olive} Weight $\Pr(\rmW | \sP, \sD)$}.}
Using Bayes' theorem and considering $m$ i.i.d. samples $\vx_j \in \sD$,
\begin{equation}\label{eq:weights_posterior}
    \begin{aligned}
        {\color{olive} \Pr (\rmW | \sP, \sD)} & \propto \Pr (\rmW | \sP) \Pr (\sD | \rmW, \sP) = \Pr (\rmW | \sP) \prod\nolimits_{j=1}^{m} \Pr(\vx_j | \rmW, \sP),
    \end{aligned}
\end{equation}
where $\Pr(\rmW | \sP)$ is the prior over weights (details are deferred to App.~\ref{app:prior}) and the data (image) likelihood $\Pr (\vx_j | \rmW, \sP)$ is obtained by marginalizing over classes $y_c \in \gY$ further:
\begin{equation}\label{eq:sample_llh}
    \Pr(\vx_j | \rmW, \sP) = \sum\nolimits_{y_c \in \gY} {\color{cyan} \Pr(\vx_j | y_c, \rmW, \sP)} \, {\color{orange} \Pr(y_c | \rmW, \sP)},
\end{equation}
which describes how it depends on class priors and class-conditional likelihood.



\paragraph{Modeling {\color{orange} Class Prior $ \Pr(y_c | \rmW, \sP) $}.}
For zero-shot classification where $\sD$ is large enough, the class prior $ \Pr(y_c | \rmW, \sP) $ can be estimated from pseudo-labels (i.e., predictions from a pre-trained VLM).
\begin{Proposition}\label{prop:class_prior}
    Let $\sD = \{\vx_j\}_{j=1}^{m}$ be an unlabeled dataset with unobserved classes $\gY = \{y_c\}_{c=1}^{C}$, and $\Pr(y_c)$ be the true class probability for class $y_c$.
    As $m$ grows, the empirical class distribution $\widehat{\Pr}(y_c | \rmW, \sP)$ from pseudo-labels converges to $\Pr(y_c)$ with exponentially decreasing error probability.
    Specifically, for any $\epsilon > 0$, we have:
    $\Pr \{ | \widehat{\Pr}(y_c | \rmW, \sP) - \Pr(y_c) | \geq \epsilon \} \leq 2 \exp ( -2 m \epsilon^2 ). $
    This implies that we can approximate true distributions by
    \begin{equation}\label{eq:class_prior_emp}
        {\color{orange} \widehat{\Pr}(y_c | \rmW, \sP) } = \frac{n_c}{\sum_{y_{c'} \in \gY} n_{c'}}, \; \forall y_c \in \gY,
    \end{equation}
    where $n_c = \sum\nolimits_{j=1}^{m} \mathbbm{1}_{\hat{y}_j = y_c}$ counts the images pseudo-labeled as class $y_c$ over all samples in $\sD$.
\end{Proposition}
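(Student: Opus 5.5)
The bound has the form of Hoeffding's inequality for an average of $m$ i.i.d. indicator variables, so the plan is to recast $\widehat{\Pr}(y_c | \rmW, \sP)$ as such an average. Since every image receives exactly one pseudo-label, $\sum_{y_{c'} \in \gY} n_{c'} = m$. Hence \eqref{eq:class_prior_emp} becomes $\widehat{\Pr}(y_c | \rmW, \sP) = \frac{1}{m}\sum_{j=1}^m Z_j$ with $Z_j = \mathbbm{1}_{\hat{y}_j = y_c} \in \{0,1\}$. Because $\rmW$ and $\sP$ are fixed before $\sD$ is seen, and the VLM encoders are fixed, each $\hat{y}_j$ is a deterministic function of $\vx_j$ alone. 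The i.i.d. assumption on $\sD$ used in \eqref{eq:weights_posterior} therefore makes $Z_1,\dots,Z_m$ i.i.d. Bernoulli variables with mean $q_c := \Pr(\hat{y} = y_c)$.

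Next, apply Hoeffding's inequality to these bounded variables with range $b_j - a_j = 1$:
\begin{equation*}
\Pr\Bigl\{ \Bigl| \tfrac{1}{m}\textstyle\sum_{j} Z_j - q_c \Bigr| \ge \epsilon \Bigr\} \le 2\exp\bigl(-2m^2\epsilon^2 / \textstyle\sum_j 1\bigr) = 2\exp(-2m\epsilon^2).
\end{equation*}
This gives exactly the stated rate, and the two-sided form accounts for the factor $2$.

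The main obstacle is identifying $q_c$ with the true prior $\Pr(y_c)$. The expectation of a pseudo-label indicator is $\Pr(y_c)$ only if the induced pseudo-label marginal matches the true class marginal. A sufficient condition is that the pseudo-labeler is perfectly accurate. A weaker condition also suffices: misclassifications balance, i.e., $\Pr(\hat{y}=y_c, y\neq y_c) = \Pr(\hat{y}\neq y_c, y = y_c)$. I would state this explicitly as the modelling assumption implicit in the proposition: the pseudo-labels from the pre-trained VLM are treated as draws from $\Pr(y)$, so that $q_c = \Pr(y_c)$. Under that assumption the bound follows. Without it, the argument still yields the same exponential concentration around $q_c$, and the deviation from $\Pr(y_c)$ is bounded by the per-class confusion imbalance $|q_c - \Pr(y_c)|$. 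I would add this as a remark so the result stays honest when the assumption fails.

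Finally, I would note that the bound holds for each $c$ separately. A union bound over the $C$ classes gives simultaneous accuracy with probability at least $1-2C\exp(-2m\epsilon^2)$. This justifies using \eqref{eq:class_prior_emp} as the plug-in estimate for the class prior in \eqref{eq:sample_llh} when $m$ is large.
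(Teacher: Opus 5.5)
Your proof is correct, and it is evidently the intended route. The paper gives no proof of this proposition: its appendix proves only the relative-likelihood lemma and the representability proposition. The rate $2\exp(-2m\epsilon^2)$ is exactly Hoeffding's inequality for an average of $m$ i.i.d.\ $\{0,1\}$-valued variables. So your reduction $\widehat{\Pr}(y_c \mid \rmW,\sP)=\frac{1}{m}\sum_{j}\mathbbm{1}_{\hat y_j=y_c}$, followed by Hoeffding, is what the authors leave unstated.

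The substantive addition is your identification of the mean $q_c=\Pr(\hat y=y_c)$ with $\Pr(y_c)$, which the paper silently assumes. You can sharpen that remark in two ways.

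First, your balance condition is not just sufficient but equivalent to $q_c=\Pr(y_c)$. This follows from $q_c-\Pr(y_c)=\Pr(\hat y=y_c,\,y\neq y_c)-\Pr(\hat y\neq y_c,\,y=y_c)$.

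Second, the assumption cannot be dropped. Let $\delta_c:=|q_c-\Pr(y_c)|>0$ and take $0<\epsilon<\delta_c$. Applying Hoeffding around $q_c$ gives $\Pr\{|\widehat{\Pr}(y_c\mid\rmW,\sP)-\Pr(y_c)|\ge\epsilon\}\ge 1-2\exp(-2m(\delta_c-\epsilon)^2)\to 1$. This contradicts the stated bound for large $m$. So the proposition as written holds exactly under the hypothesis you make explicit, and your triangle-inequality remark is the right statement otherwise.

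One small caveat concerns your i.i.d.\ step, which needs $\rmW$ to be independent of $\sD$. That holds for the fixed $\rmW$ in the proposition. It also holds for the per-prompt labels $\hat y_{j,i}$ of single-pass CARPRT, which do not depend on $\rmW$. It fails for the iterative variant's pseudo-labels once $\rmW^{t-1}$ has been estimated from the same $\sD$.
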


\paragraph{Modeling {\color{cyan} Likelihood $ \Pr(\vx_j | y_c, \rmW, \sP)$}.}
Given that images $\vx_j$ often lie in high-dimensional spaces, directly modeling the class-conditional likelihood can be challenging.
We therefore adopt Energy-based Models~(EBMs) \citep{lecun2006tutorial} that excel at modeling high-dimensional distributions by defining an \textit{unnormalized} energy function, normalized by a partition function.
Interpreting ${\rm sim}(\vz^{\rm I}_j, \vz^{\rm T}_c)$ as the negative energy (lower energy means more likely), we have
\begin{equation}\label{eq:ebm}
    {\color{cyan} \Pr(\vx_j | y_c, \rmW, \sP) } =\frac{1}{Z(y_c,\rmW,\sP)}\exp \left\{ \text{sim}(\vz^{\rm I}_j, \vz^{\rm T}_c) \right\},
\end{equation}
where $\vz^{\rm I}_j = f(\vx_j)$ is the image embedding, $\vz^{\rm T}_c = g(p_i(y_c))$ is weighted text embedding for class $y_c$ using $\rmW_c$ (from $\rmW$).
While the partition function $Z(y_c,\rmW,\sP) = \int_{\gX} \exp( \text{sim}(\vz^{\rm I}, \vz^{\rm T}_c) ) {\rm d} \vx $ makes exact computation intractable, for classification we only need relative likelihoods of different classes.
\begin{lemma}[Relative Likelihood]
    \label{lemma:rel_llh}
    Assume ${\rm sim}(\va, \vb) = \va^{\top} \vb$ (for $\ell_2$-normalized embeddings), then:
    \begin{equation}
        \label{eq:rel_llh}
        \begin{aligned}
            {\color{cyan} \Pr(\vx_j | y_c, \rmW, \sP)} \propto \exp \left\{ \mathrm{sim}(\vz^{\rm I}_j, \vz^{\rm T}_c) \right\} \propto \exp \left\{ \sum_{i = 1}^{n} (w_{i, c} \, \vz^{\rm T}_{i, c})^{\top} \cdot \vz^{\rm I} \right\}.
        \end{aligned}
    \end{equation}
    This proportion relationship shows that class-specific weights $w_{i,c}$ (for $c \in \{1, \dots, C \}$) indeed determine the influence of each prompt $p_i$ (via its embedding $\vz_{i, c}^{\rm T}$) on the likelihood for class $y_c$.
\end{lemma}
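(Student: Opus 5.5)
The plan is to start from the energy-based form of the class-conditional likelihood in Eq.~\ref{eq:ebm} and show that the partition function $Z(y_c,\rmW,\sP)$ can be treated as a constant when comparing classes. Then substitute the weighted class embedding from Eq.~\ref{eq:cpw_prelim} into the inner-product similarity and use bilinearity.

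\emph{First proportionality.} By Eq.~\ref{eq:ebm}, $\Pr(\vx_j | y_c, \rmW, \sP) = Z(y_c,\rmW,\sP)^{-1}\exp\{\mathrm{sim}(\vz^{\rm I}_j, \vz^{\rm T}_c)\}$.

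\begin{itemize}
\item The proportionality in the statement is meant in the relative sense used for classification: $Z$ does not depend on the image $\vx_j$. So for a fixed class, the likelihood is proportional to $\exp\{\mathrm{sim}(\vz^{\rm I}_j,\vz^{\rm T}_c)\}$ as a function of $\vx_j$.
\item The delicate point is comparing across classes, since $Z$ may depend on $c$. I would try to justify treating $Z$ as approximately class-independent as follows. With $\ell_2$-normalized embeddings on the sphere, if the image embedding distribution induced by $\vx \in \gX$ is roughly rotation-invariant, then $\int \exp(\vz^{\rm I\top}\vz^{\rm T}_c)\,{\rm d}\vx$ depends only on $\|\vz^{\rm T}_c\|$. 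That norm is fixed after renormalizing the class embedding.
\item If that route is too strong, I would fall back on stating the result as proportionality up to the class-dependent factor $Z^{-1}$. That factor can be absorbed into the class prior in Eq.~\ref{eq:sample_llh}, exactly as the text before the lemma says ("for classification we only need relative likelihoods").
\end{itemize}

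\emph{Second proportionality.} Write $\vz^{\rm T}_c = \frac{1}{n}\sum_{i=1}^n w_{i,c}\vz^{\rm T}_{i,c}$, the $c$-th row of Eq.~\ref{eq:cpw_prelim}. With $\mathrm{sim}(\va,\vb)=\va^\top\vb$, linearity gives
\[
\mathrm{sim}(\vz^{\rm I}_j,\vz^{\rm T}_c) = \frac{1}{n}\sum_{i=1}^n (w_{i,c}\vz^{\rm T}_{i,c})^\top \vz^{\rm I}_j .
\]
The factor $1/n$ is a common positive constant across all classes and images. It changes the exponent by a global temperature scaling, which preserves the ordering of likelihoods; this is monotone equivalence rather than strict proportionality. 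I would either absorb it into the definition of the weights (rescaling $\rmW$ by $1/n$, consistent with the simplex constraint up to normalization) or read "$\propto$" in this monotone sense.

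\emph{Expected obstacle and conclusion.} The main obstacle is being honest about what "$\propto$" means given the class-dependent $Z$ and the $1/n$ scaling. I would state precisely the sense in which the relation holds, namely equality up to factors independent of $\vx_j$, or that do not affect the argmax over classes. The final remark of the lemma then follows by inspection: each $w_{i,c}$ multiplies only the contribution of $\vz^{\rm T}_{i,c}$ to the exponent for class $y_c$, so it governs that prompt's influence on that class's likelihood.
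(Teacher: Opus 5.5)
Your proposal is correct and follows the paper's two-step skeleton: discard the partition function on relative-likelihood grounds, then expand the similarity linearly. The second step, however, is argued differently.

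The paper never substitutes \eqref{eq:cpw_prelim}. It instead posits that the ensembled score is an arbitrary affine aggregate of per-prompt scores, $\mathrm{sim}(\vz^{\rm I},\vz^{\rm T}_c)=\sum_{i=1}^{n}\alpha_{i,c}\,\mathrm{sim}(\vz^{\rm I},\vz^{\rm T}_{i,c})+\beta_c$ with real $\alpha_{i,c},\beta_c$. It then pulls $\exp(\beta_c)$ out as a multiplicative factor and simply defines $w_{i,c}:=\alpha_{i,c}$. This buys generality and removes your $1/n$ obstacle by definition: the $1/n$ (and any renormalization of the class embedding) sits inside $\alpha_{i,c}$. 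That is exactly your ``rescale $\rmW$'' option with the simplex constraint ignored, so you do not need the monotone reading of $\propto$. Your route, in exchange, ties the lemma to the concrete ensemble of \eqref{eq:cpw_prelim}.

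On the partition function, the paper is no more careful than you are. It drops $Z(y_c,\rmW,\sP)$ precisely for cross-class comparison, which is where its class dependence matters, and it likewise discards the class-dependent $\exp(\beta_c)$. Your fallback of absorbing such factors into the class prior of \eqref{eq:sample_llh} is the honest form of that step, and you should commit to it.

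Do not combine your rotation-invariance route with your second step as written. Fixing $\|\vz^{\rm T}_c\|$ requires renormalizing the class embedding. After that, the exponent carries the class-dependent factor $1/\|\sum_{i} w_{i,c}\vz^{\rm T}_{i,c}\|$ rather than the global constant $1/n$. That factor is not a temperature and can change the argmax unless it, too, is folded into the weights.
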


\textbf{Why Class-Specific Weighting Matters}.
It is easy to check that Lemma~\ref{lemma:rel_llh} (proof in App.~\ref{app:proof}) aligns with the most {\em general form} of prompt ensembling (\eqref{eq:cpw_prelim}).
Crucially, class-agnostic weighting (i.e., independent) schemes, such as WPE, {\em deviate from this form} by unnecessarily imposing shared $w_{i,c}$ for all classes $y_c$, which fundamentally limits model expressivity.
\begin{Proposition}
    \label{prop:representability}
    Let $\gX$ be the image space and $\gY$ be the class space.
    Given prompt set $\sP$, for any prompt reweighting scheme $S$, define the representable likelihood set $\gF_{S}$ as:
    \begin{equation*}
        \begin{aligned}
            \gF_S
            = \Bigl\{
            f : \gX \times \gY \to \sR_+
            \;\Big|\;
            \exists \,\rmW \in \gW_S,\;\sP,\;\text{ s.t. }\quad
            f(\vx, y_c) \;\propto\; \Pr(\vx \mid y_c, \rmW, \sP)\Bigr\},
        \end{aligned}
    \end{equation*}
    where $\gW_S$ is the weight space under scheme $S$.
    Let $\gF_{\text{CI}}$ and $\gF_{\text{CS}}$ be the representable likelihood sets induced from class-independent weighting (i.e., WPE) and class-specific weighting ({\it cf.} \eqref{eq:cpw_prelim}) schemes, respectively.
    Then, we have: $\exists f^{*} \in \gF_{\text{CS}}$ such that $\forall f_{\mathrm{CI}} \in \gF_{\mathrm{CI}}, \exists \vx \in \gX, y_c \in \gY$ where $f^{*}(\vx, y_c) \neq f_{\mathrm{CI}}(\vx, y_c)$.
    That is, $\gF_{\text{CI}}$ is a strict subset of $\gF_{\text{CS}}$.
\end{Proposition}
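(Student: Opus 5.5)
The plan has two parts. First I show the inclusion $\gF_{\mathrm{CI}} \subseteq \gF_{\mathrm{CS}}$. Then I exhibit a single likelihood $f^*\in\gF_{\mathrm{CS}}$ that no class-independent weighting can reproduce.

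\textbf{Inclusion.} The class-independent weight space $\gW_{\mathrm{CI}}$ consists of matrices $\rmW$ whose columns $\rmW_c$ coincide for all $c$. This is a subset of the class-specific space $\gW_{\mathrm{CS}}$, which only requires nonnegative, normalized columns. Any $f\in\gF_{\mathrm{CI}}$ is witnessed by some $\rmW\in\gW_{\mathrm{CI}}\subseteq\gW_{\mathrm{CS}}$, so $f\in\gF_{\mathrm{CS}}$.

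\textbf{Separation.} By Lemma~\ref{lemma:rel_llh}, every representable likelihood has, up to a factor independent of the class, the form
\begin{equation*}
f(\vx,y_c)\propto \exp\Bigl\{\sum\nolimits_{i} w_{i,c}\,(\vz^{\rm T}_{i,c})^\top \vz^{\rm I}\Bigr\}.
\end{equation*}
I will use a concrete configuration with $n=2$ prompts, $C=2$ classes, and embeddings in $\sR^2$. Take
\begin{equation*}
\vz^{\rm T}_{1,1}=\vz^{\rm T}_{2,2}=e_1,\qquad \vz^{\rm T}_{2,1}=\vz^{\rm T}_{1,2}=e_2 .
\end{equation*}
Choose $f^*$ from the class-specific weights $\rmW_1=(1,0)^\top$ and $\rmW_2=(0,1)^\top$. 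Then both classes have text embedding $e_1$, so $f^*(\vx,y_1)/f^*(\vx,y_2)=1$ for every $\vx$.

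Any class-independent weight vector $(w,1-w)$ gives class embeddings $\vz_1=(w,1-w)$ and $\vz_2=(1-w,w)$. The log ratio is then $(\vz_1-\vz_2)^\top\vz^{\rm I}=(2w-1)(z_1^{\rm I}-z_2^{\rm I})$. This is identically zero only when $w=1/2$. At $w=1/2$ both embeddings equal $(1/2,1/2)$, which matches the ratio of $f^*$ but not its dependence on $\vx$. Indeed, $f^*(\vx,y_c)\propto\exp(z^{\rm I}_1)$, whereas the $w=1/2$ model gives $\exp\bigl((z^{\rm I}_1+z^{\rm I}_2)/2\bigr)$.

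To make this rigorous, I compare log-likelihoods as functions of $\vz^{\rm I}$ on the unit circle. Suppose the two forms agreed up to a multiplicative constant. Then $(\vz^{\rm T}_c-\vz'_c)^\top\vz^{\rm I}$ would be constant over $\vz^{\rm I}\in\sS^1$, which forces $\vz^{\rm T}_c=\vz'_c$. So it suffices to show that no class-independent $w$ gives $\vz_1=\vz_2=e_1$. Since $\vz_1=\vz_2$ already forces $w=1/2$, and $(1/2,1/2)\neq e_1$, no such $w$ exists. This yields the required $\vx,y_c$ with $f^*(\vx,y_c)\neq f_{\mathrm{CI}}(\vx,y_c)$.

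\textbf{Main obstacle.} The delicate step is the meaning of ``$\propto$''. The partition function $Z(y_c,\rmW,\sP)$ can absorb class-dependent constants. I will therefore fix equality up to a class-wise constant and argue through the $\vx$-dependence, which constants cannot absorb. This is why the argument requires the image embeddings $f(\gX)$ to span enough directions, for instance to contain an open subset of the sphere. I will state this as a mild assumption on the encoder.
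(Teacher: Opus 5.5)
Your proof is correct, and it takes a different route from the paper's.

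\textbf{The paper's route.} The paper does not use the embedding-level form of Lemma~\ref{lemma:rel_llh}. It models the ensemble as an arithmetic mixture $f(\vx,y_c)=\sum_i w_{i,c}\Pr(\vx\mid y_c,p_i)$ of prompt-conditional likelihoods $a_{i,c}$. It takes three classes and two prompts with unnormalized weights $(2,1)$, $(1,2)$, $(3,3)$. It then argues by coefficient matching that a shared $(w_1,w_2)$ would have to equal both $(2,1)$ and $(1,2)$. This is shorter and needs no concrete embeddings, but it has three weaknesses:
\begin{itemize}
\item It treats $f^*=f_{\mathrm{CI}}$ as exact equality, ignoring the class-dependent partition function.
\item The step ``must hold for any $a_{1,c},a_{2,c}>0$'' silently assumes that $a_{1,c}$ and $a_{2,c}$ are linearly independent functions of $\vx$.
\item It never states the inclusion $\gF_{\mathrm{CI}}\subseteq\gF_{\mathrm{CS}}$.
\end{itemize}

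\textbf{What your route buys.} You stay with the exponential form that actually defines $\Pr(\vx\mid y_c,\rmW,\sP)$ via \eqref{eq:ebm}. You absorb the partition function as a class-wise constant and separate through the $\vx$-dependence. You prove both halves of ``strict subset.'' You also make the needed nondegeneracy explicit: your spanning assumption on $f(\gX)$ plays the role of the paper's implicit linear-independence assumption.

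Your example also holds if $\mathrm{sim}$ is read as cosine similarity rather than a raw inner product. In that case $(w,1-w)\propto e_1$ and $(1-w,w)\propto e_1$ still force $w=1$ and $w=0$ respectively. It likewise holds for unnormalized shared weights.

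\textbf{One wording fix.} In your separation paragraph, the omitted factor is independent of $\vx$, not of the class. You correct this yourself under ``Main obstacle,'' and your rigorous step already uses the correct version.
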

\begin{Remark}
    Proposition~\ref{prop:representability} formally states that class-specific weighting allows for capturing a richer set of image-text relationships than class-agnostic ones.
    To maximize potential expressivity, prompt weights $w_{i, c}$ {\bf must} be class-specific to ensure that each class benefits from the most relevant prompts.
\end{Remark}

\paragraph{Modeling {\color{purple} Predictive Probability $\Pr(y^* |\vx^*, \sP, \sD, \rmW)$}.}
We now come to predicting the label $\hat{y}_*$ for the query image $\vx_*$.
As zero-shot classification is \textit{training-free}, a practical way is to approximate
{\color{purple} full $\Pr(y^* | \vx^*, \sP, \sD, \rmW)$} with {\color{purple} $\Pr(y^* | \vx^*, \sP, \widehat{\rmW})$}, where $\widehat{\rmW}$ is a point estimate {\em derived from unlabeled data} $\sD$, per our discussion
in \eqref{eq:class_prior_emp} and \eqref{eq:rel_llh}.
By considering each prompt $p_i \in \sP$, we have
\begin{equation}\label{eq:predict}
    \begin{aligned}
        {\color{purple} \Pr(y^* | \vx^*, \sP, \widehat{\rmW})} = \sum_{p_i \in \sP} \Pr(y^* | \vx^*, p_i, \widehat{\rmW}) \propto \frac{\exp \left( \sum_{i = 1}^{n} (w_{i, c} \, \vz^{\rm T}_{i, c})^{\top} \cdot \vz^{\rm I}_* \right)}{ \sum_{c^{\prime} \in {1, \ldots, C}} \exp \left( \sum_{i = 1}^{n} (w_{i, c^\prime} \, \vz^{\rm T}_{i, c^\prime})^{\top} \cdot \vz^{\rm I}_* \right)}.
    \end{aligned}
\end{equation}
By now, we have framed VLM-based zero-shot classification in a probabilistic framework (\eqref{eq:objective}), justified class-aware prompt reweighting (Propositions~\ref{prop:class_prior} and~\ref{prop:representability}), and interpreted how class prediction for a query image can be performed (\eqref{eq:predict}) under this understanding.

\section{Class-aware Prompt Reweighting for VLMs}

\label{sec:method}

Guided by the probabilistic principles from Sec.~\ref{sec:theore}, we next introduce CARPRT, a minimalistic {\em training-free} method designed to compute class-specific weights for prompt ensembling in VLMs.

\paragraph{Overview.}
Given an unlabeled dataset $\sD = \{ \vx_j \}_{j=1}^{m}$, an {\em unknown} class space $\gY = \{y_1, \dots, y_C \}$, a {\em fixed} prompt set $\sP = \{ p_{i} \}_{i=1}^{n}$, and a pre-trained VLM accessed via forward inference only, CARPRT aims to find the optimal weight matrix $\rmW^* \in \sR^{n \times C}$, where each column $\rmW_c^* = [w_{1,c}^*, \dots, w_{n,c}^*]^\top$ denotes the relative importance of different prompts for a particular class $y_c$ and specifies the contribution of each prompt $p_i$ to the class representation, as with Problem~\ref{pro:pe}.
Recall that the key insight driving CARPRT is that optimal prompt weights should reflect the {\bf semantic alignment} between prompts and class concepts.
As depicted in Fig.~\ref{fig:method}, CARPRT implements this insight through two steps: \emph{Score Calculation} and \emph{Weight Calculation} (the algorithmic outline can be found in App.~\ref{Asec:more_carprt_details}).

\paragraph{Stage 1: Prompt Relevance Score Calculation.}
\eqref{eq:weights_posterior} and~\eqref{eq:sample_llh} suggest that estimating weight distribution $\Pr(\rmW | \sP, \mathbb{D})$ hinges on the individual data likelihood $\Pr(\vx_j | y_c,\rmW, \sP)$.
As Lemma~\ref{lemma:rel_llh} established, $\Pr(\vx_j | y_c,\rmW, \sP)$ is proportional to the VLM's similarity score, which is thus leveraged by CARPRT via forward similarity evaluations to compute raw similarity scores between all image embeddings and all prompt-derived text class embeddings.
For an image $\vx_j \in \mathbb{D}$, a prompt template $p_i \in \sP$, and class $y_c \in \gY$, the relevance score $s_{j,i,c}$ is:
\begin{equation}
    \begin{aligned}
        s_{j,i,c} = \text{sim}(\vz_j^{\rm I}, \vz_{i, c}^{\rm T}),
    \end{aligned}
    \label{equ:clip_score}
\end{equation}
where $\vz_j^{\mathrm{I}} = f(\vx_j)$ is the image embedding and $\vz_{i, c}^{\mathrm{T}} = g(p_i(y_c))$ is the text embedding for class $y_c$ under prompt $p_i$.
This yields a {\em score tensor}, wherein each entry $s_{j,i,c}$ is an unnormalized estimate of $\Pr(\vx_j | y_c,\rmW, \sP)$. 
The {\em score tensor} captures the semantic compatibility among all images $\sD$, prompts $\sP$, and classes $\gY$, providing the foundation for reweighting prompt-template combinations.

\begin{figure*}[t]
    \begin{center}
        \includegraphics[width=.95\textwidth]{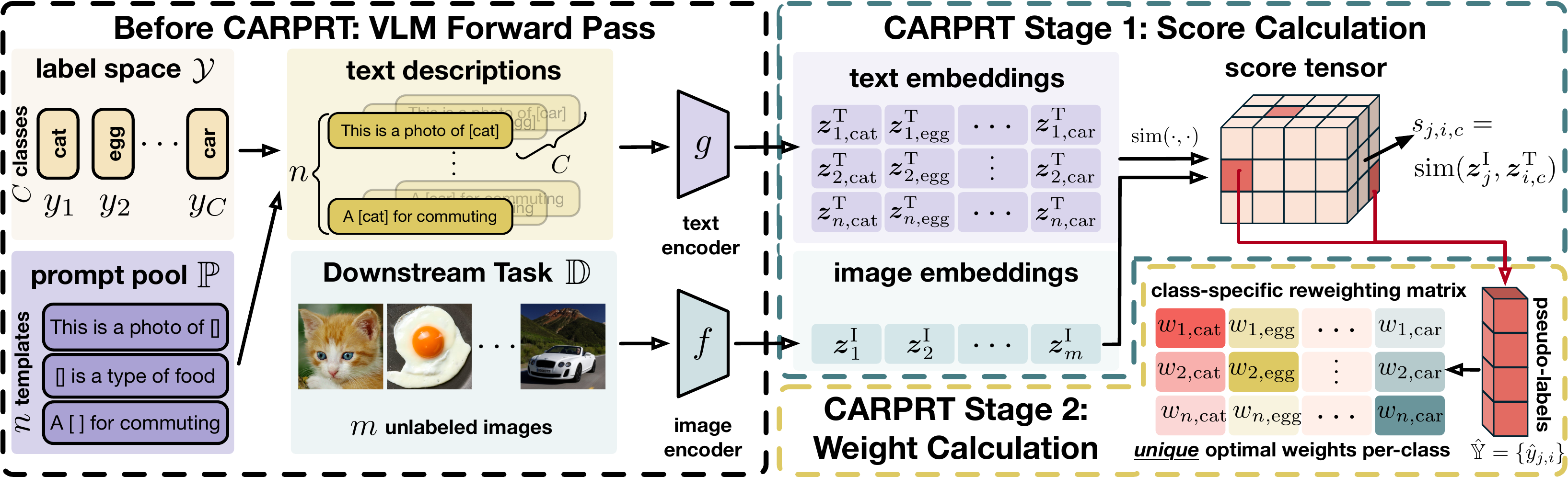}
    \end{center}
    \vspace{-0.5em}
    \caption{\footnotesize
        The CARPRT pipeline. First, the text encoder $g$ and image encoder $f$ yield textual class embeddings (from $C$ classes and $n$ prompts) and image embeddings (from $m$ unlabeled images). Then, compute the score tensor from image-text embedding similarities, each entry $s_{j,i,c}$ measures the relevance between the $i$-th prompt and the $j$-th image for the $c$-th class. Extract pseudo-labels from the score tensor, and derive the class-aware prompt reweighting matrix $\rmW$, which assigns class-specific weights for each prompt based on the scores.
    }
    \label{fig:method}
\end{figure*}
\paragraph{Remark.} \eqref{equ:clip_score} is expressed using embeddings for convenience, but in practice we only query the VLM for
the similarity scores $s_{j,i,c}$. Hence CARPRT works with a black-box CLIP interface and does not
require explicit access to the embeddings.

\paragraph{Stage 2: Class-Specific Weight Calculation.}
\label{sec:class_aware_weight_calculation}

The second stage transforms unnormalized similarity scores into normalized class-specific prompt weights through a process that {\em mirrors our probabilistic analysis} in Sec.~\ref{sec:theore}.
By empirically quantifying each prompt's relevance to specific classes, the resulting weights ensure that prompts primarily contribute to the aggregated representation of their most semantically aligned classes.

First, we create a pseudo-label set $\hat{\sY} = \{ \hat{y}_{j, i} \}_{j=1, i=1}^{m, n}$ without any parameter update, by identifying, for each image-prompt pair, the class with the highest similarity score $\hat{y}_{j,i}= \arg\max_{y_c \in \gY} s_{j,i,c}$.
Then, we calculate intermediate weight $ w'_{i,c} $ for each prompt-class pair by aggregating the scores $ s_{j,i,c} $ across all images $ \vx_j $ predicted to class $ y_c $ under prompt $p_i$. This can be expressed as:
\begin{equation}
    \label{equ:weight}
    \begin{aligned}
        w^{\prime}_{i,c}=\frac{\sum\nolimits_{j=1}^{m}  s_{j,i,c} \mathbbm{1}_{\hat{y}_{j,i} = y_c}}{\sum_{j=1}^{m} \mathbbm{1}_{\hat{y}_{j,i} = y_c}}.
    \end{aligned}
\end{equation}
Here, \( \mathbbm{1}_{\hat{y}_{j,i} = y_c} \) is the indicator function.
\eqref{equ:weight} implements an empirical estimate of the class priors.
$w^{\prime}_{i,c}$ reflects the average strength of association prompt $p_i$ shows for class $y_c$ across $\sD$, when $p_i$ itself identifies $y_c$ as the best match.
Finally, these intermediate weights are normalized via
\begin{equation}
    \label{equ:softweght}
    w_{i,c}^* = \frac{\exp{(w^{\prime}_{i,c}/\tau)}}{\sum_{j=1}^{n} \exp{(w^{\prime}_{j,c}/\tau)}}.
\end{equation}
The temperature $\tau$ controls the sharpness of the distribution.
This normalization ensures weights sum to one for each class, preserving their probabilistic validity.
By constructing $w^{*}_{i, c}$ in this way, we integrate empirical class distributions into the reweighting scheme, ensuring that $w^{*}_{i, c}$ reflects both the relevance scores (\eqref{eq:sample_llh}) and the estimated class priors (\eqref{eq:class_prior_emp}), thus providing a principled inference time approach to achieve {\it class-aware prompt reweighting}.

\textbf{Inference with Estimated Weights.} After calculating class-specific prompt weights $w^{*}_{i,c}$, we perform zero-shot inference by aggregating prompt relevance scores for each class.
Given a test image $\vx^{*}$ with embedding $\vz^{I}_{*}=f(\vx^{*})$ and prompt text embeddings $\vz^{T}_{i,c}=g(p_i(y_c))$, we first compute
$s_{*,i,c}=\mathrm{sim}(\vz^{I}_{*}, \vz^{T}_{i,c})$ following \eqref{equ:clip_score}. We then define the class score and prediction as

\begin{equation}
\label{equ:class_score}
\begin{aligned}
\hat{y}(\vx^{*})
&= \arg\max_{c\in\{1,\dots,C\}}\, s_c(\vx^{*}),
\qquad
s_c(\vx^{*})
= \sum_{i=1}^{n} w^{*}_{i,c}\, s_{*,i,c}.
\end{aligned}
\end{equation}

This inference step only requires the similarity scores returned by the VLM (score-only black-box queries) and does {\em not} require access to model parameters, gradients, or internal embeddings.

\textbf{(Optional): Iterative Refinement.}
While the single-pass pipeline described above forms the core of our approach, CARPRT can naturally be extended to refine both pseudo-labels and weights, by following the procedure {\em iteratively}:
(i). Use current weight estimates to combine predictions from all prompts into refined pseudo-labels;
(ii). Update class-specific weights based on these refined pseudo-labels.
Importantly, this refinement procedure is {\em gradient-free} and thus does {\em not} require access to ground-truth labels.
This alternating refinement process allows CARPRT to sharpen its weight estimates as pseudo-label quality improves.
Full details are in App.~\ref{Asec:iterative_method}.


\section{Experiments}
\label{sec:exp}

We evaluate how CARPRT performs on \emph{zero-shot classification} with ten fine-grained benchmarks, compared to existing {\em prompt ensembling} methods.
Our investigation centers on three questions:
\textbf{(RQ1)} Does class-aware prompt reweighting outperform class-agnostic ones; if so, does it generalize across different VLM architectures and backbones?
\textbf{(RQ2)} What factors contribute to CARPRT's effectiveness?
\textbf{(RQ3)} Can CARPRT's benefit extend beyond zero-shot classification?

\begin{table*}[!t]
    \caption{\footnotesize Accuracy (\%) comparison between baselines and our method \% on various fine-grained classification datasets using CLIP and DeCLIP backbones. \textbf{Bold} values indicate the highest accuracy, while \underline{underlined} values represent the second highest in each column.
            {\color{gray}
                \textsuperscript{*}
                ``Human Selection'' uses handcrafted prompts recommended by CLIP authors and introduces external knowledge.
                Results are not directly comparable to automated methods.}
    }
    \centering
    \begin{adjustbox}{max width=\textwidth}
        \begin{tabular}{lccccccccccccc}
            \toprule
                                               & Caltech101           & DTD                  & EuroSAT              & Aircraft             & Food101              & Flower102            & Pets                 & Cars                 & SUN397               & UCF101               & ImageNet             & Average              \\
            \midrule
            \multicolumn{13}{c}{CLIP-ViT-B/16}                                                                                                                                                                                                                                                              \\
            \midrule
            MPE                                & 92.50                & 46.88                & 51.86                & 21.49                & 85.34                & 64.21                & 79.46                & 65.21                & 64.92                & 67.41                & 67.59                & 64.26                \\
            Majority Vote                      & \underline{93.10}    & 46.75                & \underline{52.07}    & 22.93                & 85.60                & \underline{67.20}    & 81.27                & 64.93                & 65.75                & 68.30                & 67.98                & 65.08                \\
            WPE                                & 93.09                & \underline{47.04}    & 49.60                & \underline{23.28}    & \underline{86.14}    & 66.60                & \underline{82.38}    & \underline{65.93}    & \underline{65.77}    & \underline{68.33}    & \underline{68.28}    & \underline{65.13}    \\
            \rowcolor{green!20}
            CARPRT (Ours)                      & \textbf{94.16}       & \textbf{48.90}       & \textbf{55.56}       & \textbf{24.49}       & \textbf{86.31}       & \textbf{71.36}       & \textbf{89.13}       & \textbf{66.14}       & \textbf{66.93}       & \textbf{70.41}       & \textbf{68.59}       & \textbf{67.45}       \\
            \cmidrule(lr){1-13}
            \rowcolor{gray!10}
            \rowfont{gray}
            Human Selection\textsuperscript{*} & \rowfont{gray} 92.94 & \rowfont{gray} 44.39 & \rowfont{gray} 47.60 & \rowfont{gray} 24.72 & \rowfont{gray} 86.06 & \rowfont{gray} 71.23 & \rowfont{gray} 88.91 & \rowfont{gray} 65.32 & \rowfont{gray} 62.50 & \rowfont{gray} 66.75 & \rowfont{gray} 68.31 & \rowfont{gray} 65.34 \\
            \midrule
            \multicolumn{13}{c}{CLIP-ResNet50}                                                                                                                                                                                                                                                              \\
            \midrule
            MPE                                & 86.41                & 41.69                & 30.34                & 16.05                & 75.53                & 56.95                & 75.98                & 55.74                & 59.32                & 60.06                & 59.12                & 56.11                \\
            Majority Vote                      & \underline{86.79}    & \textbf{42.14}       & 28.86                & \underline{16.29}    & 76.00                & \underline{60.06}    & 77.29                & 56.01                & \underline{60.40}    & 60.87                & 59.24                & 56.72                \\
            WPE                                & 86.65                & 40.89                & \underline{30.65}    & 16.11                & \underline{76.15}    & 58.82                & \underline{78.43}    & \underline{56.02}    & 59.71                & \underline{61.53}    & \underline{59.78}    & \underline{56.79}    \\
            \rowcolor{green!20}
            CARPRT (Ours)                      & \textbf{88.46}       & \underline{41.31}    & \textbf{36.84}       & \textbf{16.88}       & \textbf{76.88}       & \textbf{65.56}       & \textbf{85.69}       & \textbf{56.44}       & \textbf{61.28}       & \textbf{63.66}       & \textbf{59.98}       & \textbf{59.36}       \\
            \cmidrule(lr){1-13}
            \rowcolor{gray!10}
            \rowfont{gray} Human Selection\textsuperscript{*} & \rowfont{gray} 86.29                & \rowfont{gray} 40.32                & \rowfont{gray} 29.56                & \rowfont{gray} 17.28                & \rowfont{gray} 75.31                & \rowfont{gray} 66.14                & \rowfont{gray} 85.77                & \rowfont{gray} 55.61                & \rowfont{gray} 58.52                & \rowfont{gray} 61.46                & \rowfont{gray} 59.71                & \rowfont{gray} 57.82                \\
            \midrule
            \multicolumn{13}{c}{DeCLIP-ViT-B/32}                                                                                                                                                                                                                                                            \\
            \midrule
            MPE                                & 94.04                & \underline{41.63}    & \underline{28.05}    & 7.10                 & 71.71                & 77.76                & 76.75                & \underline{52.22}    & 62.08                & 57.87                & 67.01                & 57.84                \\
            Majority Vote                      & \underline{94.26}    & 40.29                & 27.68                & \underline{7.70}     & 72.34                & 78.19                & 77.75                & 51.87                & 62.86                & 58.20                & 67.24                & 58.03                \\
            WPE                                & 94.08                & 40.97                & 27.92                & 7.54                 & \underline{73.15}    & \underline{81.32}    & \underline{80.92}    & 52.21                & \underline{63.23}    & \underline{58.91}    & \underline{67.97}    & \underline{58.93}    \\
            \rowcolor{green!20}
            CARPRT (Ours)                      & \textbf{94.37}       & \textbf{43.31}       & \textbf{33.14}       & \textbf{8.76}        & \textbf{74.15}       & \textbf{82.42}       & \textbf{83.28}       & \textbf{52.23}       & \textbf{64.12}       & \textbf{59.57}       & \textbf{68.08}       & \textbf{60.31}       \\
            \cmidrule(lr){1-13}
            \rowcolor{gray!10}
            \rowfont{gray} Human Selection\textsuperscript{*} & \rowfont{gray} 93.97                & \rowfont{gray} 42.55                & \rowfont{gray} 30.07                & \rowfont{gray} 9.05                 & \rowfont{gray} 73.59                & \rowfont{gray} 83.41                & \rowfont{gray} 83.14                & \rowfont{gray} 50.77                & \rowfont{gray} 63.14                & \rowfont{gray} 58.70                & \rowfont{gray} 67.85                & \rowfont{gray} 59.66                \\

            \bottomrule
        \end{tabular}
    \end{adjustbox}
    \label{tab:fine}
\end{table*}

\subsection{Experimental Setup}
\textbf{Dataset}.
We evaluate on eleven classification benchmarks spanning diverse visual domains: Caltech101, DTD, EuroSAT, Aircraft, Food101, Flowers102, Pets, Cars, Sun397, UCF101 and ImageNet (details in App.~\ref{Asec:data}).
We follow the evaluation protocol established by~\citet{zhou2022learning}.

\textbf{Models and Prompts}.
We test CARPRT with three configurations: CLIP~\citep{radford2021learning} with ViT-B/16 and ResNet50 backbones, and DeCLIP~\citep{li2021supervision} with the ViT-B/32,
to validate if CARPRT generalizes across both CNN-based~\citep{he2016deep} and transformer-based~\citep{dosovitskiy2020image} backbones, and different VLM architectures.
For all experiments, we use the same fixed set of 247 prompt templates from~\citet{allingham2023simple} to ensure fair comparisons.

\textbf{Baselines.}
We compare CARPRT against three automated PE baselines:
(1) MPE~\citep{radford2021learning}: Uniformly averages embeddings from all prompts.
(2) Majority Vote~\citep{allingham2023simple}: Final prediction is based on the most frequent class predicted by individual prompts.
(3) WPE~\citep{allingham2023simple}: Estimates a class-agnostic set of prompt weights from unlabeled test data.
As an upper-bound reference, we also report ``{\color{gray} Human Selection}'' which uses a subset of prompts {\em manually filtered} for each dataset by human experts.
This helps to benchmark automated methods against careful prompt engineering.
See App.~\ref{Asec:baseline} for details.

\textbf{Implementation}.
We follow the publicly available code of baselines, with two adjustments noted.
We use a smaller batch size for weight estimation due to resource limitations, and we omit its original frequency normalization step, which requires the external LAION-400M dataset~\citep{schuhmann2021laion}, since this step is not the focus of this study (See App.~\ref{Asec:freq} for the analysis of the impact).
Moreover, this omission ensures all methods align with our problem setting of using \emph{only unlabeled test data} without external resources, for fair comparison.
Details and code are in App.~\ref{Asec:exp}.

\subsection{Results of Zero-Shot Classification}

\textbf{Overall Comparison}. 
Tab.~\ref{tab:fine} shows that CARPRT consistently achieves the best accuracy across both fine-grained benchmarks and large-scale real-world datasets, such as ImageNet (with further evaluations on its variants provided in App.~\ref{Asec:imagenet}).
Gains are pronounced on datasets like Flower102 and Pets, highlighting the substantial impact of class-specific prompt relevance. 
Notably, CARPRT also surpasses Human Selection, where task-relevant prompts are manually filtered. 
This confirms that capturing class-specific weights can effectively {\em compensate for irrelevant prompts in generic prompt pools} and potentially outperform dataset-specific manual prompt engineering.

\begin{wrapfigure}{r}{0.47\textwidth}
    \centering
    \includegraphics[width=\linewidth]{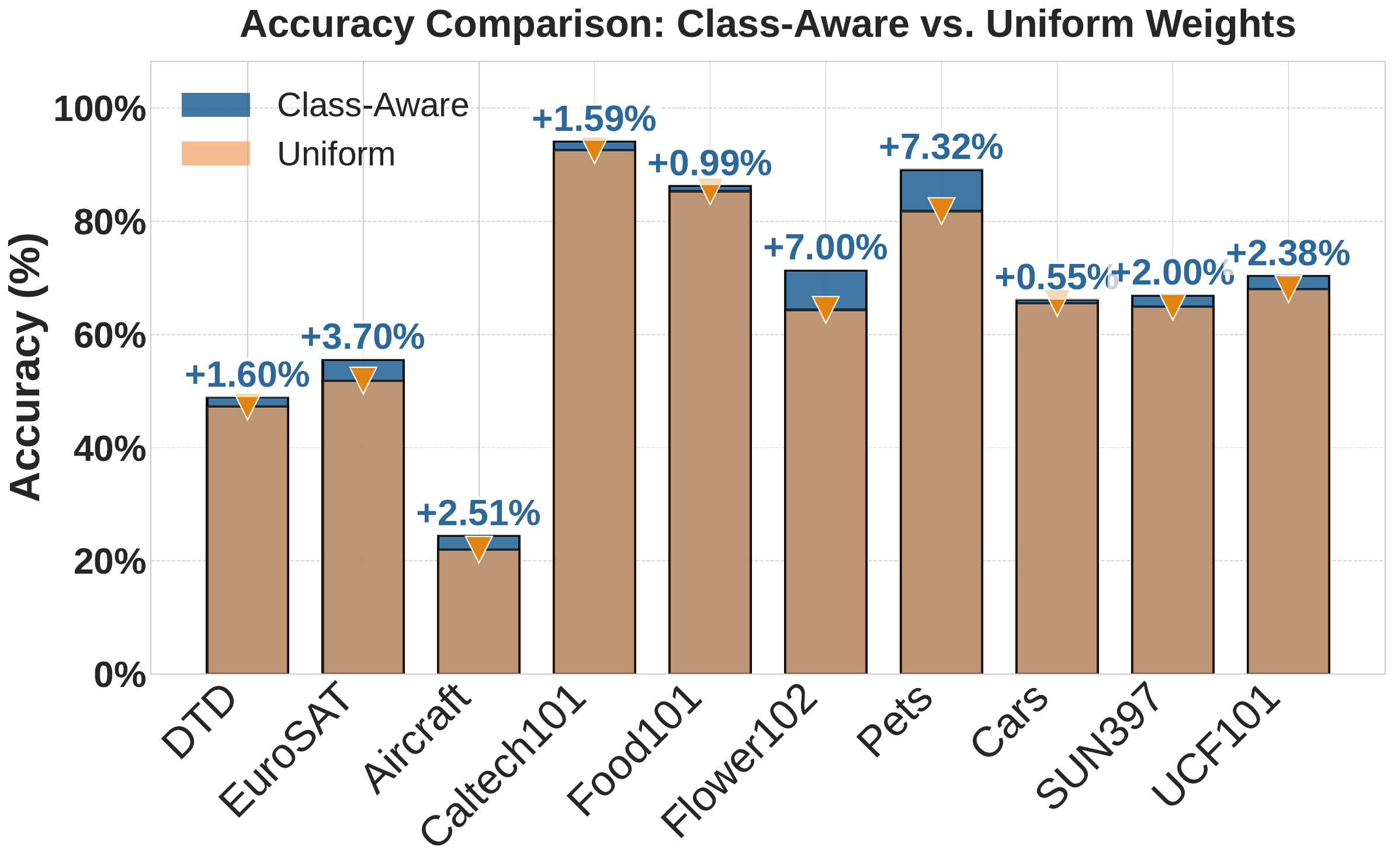}
    \vspace{-2em}
    \caption{\footnotesize Accuracy gains of CARPRT over CARPRT-Uniform.}
    \label{fig:uniform_carprt}
\end{wrapfigure}

\textbf{Generalization and Robustness.}

\emph{Across architectures.} CARPRT's performance benefits are consistent across different VLM architectures and backbones. With CLIP-ResNet50, despite its lower capacity than ViT-B/16, CARPRT still achieves clear and measurable gains. When applied to DeCLIP-ViT-B/32, which adopts a distinct pre-training strategy, CARPRT likewise maintains its strong lead. Overall, performance across diverse model configurations suggests that CARPRT can effectively {\em capture semantic relationships}, rather than exploiting a particular setup. Moreover, because it operates in a training-free, black-box manner, CARPRT is readily applicable whenever a VLM exposes only forward similarity queries.

\emph{Under distribution shifts.} We further evaluate robustness under distribution shifts on ImageNet and four variants: ImageNet-A, ImageNet-R, ImageNet-Sketch, and ImageNet-V2. In this setting, prompt weights are estimated once using only unlabeled samples from the in-distribution ImageNet test set, and the same weights are directly transferred to all variants for evaluation, without access to their target distributions during estimation. As shown in Tab.~\ref{tab:ood_robustness}, CARPRT consistently surpasses MPE and WPE across all variants. These results suggest that CARPRT’s reweighting strategy generalizes well under distribution shifts. We hypothesize that this robustness stems from estimating prompt--class relevance primarily via image--text score statistics tied to class semantics, and that ImageNet’s larger sample size yields more stable weight estimates that transfer effectively across distributions.

\begin{table}[t]
\centering
\caption{\footnotesize
Accuracy (\%) under distribution shifts on ImageNet and its variants using CLIP ViT-B/16.
Prompt weights are estimated once on in-distribution ImageNet and directly transferred to ImageNet-A, ImageNet-R, ImageNet-Sketch, and ImageNet-V2.
\textbf{Bold} values indicate the highest accuracy in each column.}
\begin{adjustbox}{max width=\textwidth}
\begin{tabular}{lcccccc}
\toprule
Method & ImageNet & ImageNet-A & ImageNet-R & ImageNet-Sketch & ImageNet-V2 & Average \\
\midrule
MPE  & 67.59 & 49.35 & 77.33 & 46.92 & 61.37 & 60.51 \\
Majority Vote  & 67.98 & 49.47 & 77.54 & 47.18 & 61.55 & 60.74 \\
WPE  & 68.28 & 50.34 & 77.34 & 47.50 & 61.96 & 61.08 \\
\midrule
\rowcolor{green!20}
            CARPRT (Ours)  & \textbf{68.59} & \textbf{51.96} & \textbf{77.69} & \textbf{47.91} & \textbf{62.51} & \textbf{61.73} \\
\bottomrule
\end{tabular}
\end{adjustbox}
\label{tab:ood_robustness}
\end{table}

Overall, these results indicate that CARPRT generalizes across both model architectures and data distributions, making it a practical plug-in for zero-shot classification in training-free, score-only black-box settings, where only forward similarity queries are available and no access to model parameters, gradients, or internal embeddings is required.

\textbf{Dataset-Specific Patterns}.
The extent of CARPRT's improvement varies by dataset, showing larger gains on datasets with well-separated semantic categories (e.g., Flowers102, Pets).
On highly specialized domains like Aircraft, the gains are modest, likely due to
(i) the quality of the initial pseudo-labels generated by base VLMs, which impact both WPE and CARPRT.
(ii) the suitability of generic prompt pool for highly specialized visual distinctions.
Nonetheless, CARPRT consistently improves performance, highlighting the broad value of class-specific weighting.
\subsection{Ablation Study and Hyperparameter Analysis}

\textbf{Role of Class-specific Weights}.
To isolate the benefit of class-specificity, we compare CARPRT to ``CARPRT-Uniform''.
This variant first computes CARPRT's class-specific weights, then averages them across classes to yield a global $w^\mathrm{u}_{i}=\frac{1}{C}\sum_c w_{i,c}$ for each prompt $p_i$.
This variant retains CARPRT's prompt scoring mechanism but discards class-level adaptation (it still differs from WPE; see App.~\ref{Asec:hyper}).
As Fig.~\ref{fig:uniform_carprt} shows, CARPRT consistently outperforms CARPRT-Uniform, with an average gain of 2.39\%.
Considerable improvements on datasets like Pets and Flowers102 affirm that tailoring prompt weights to individual classes is key to performance.
\begin{wrapfigure}{l}{0.45\textwidth}
    \centering
    \includegraphics[width=\linewidth]{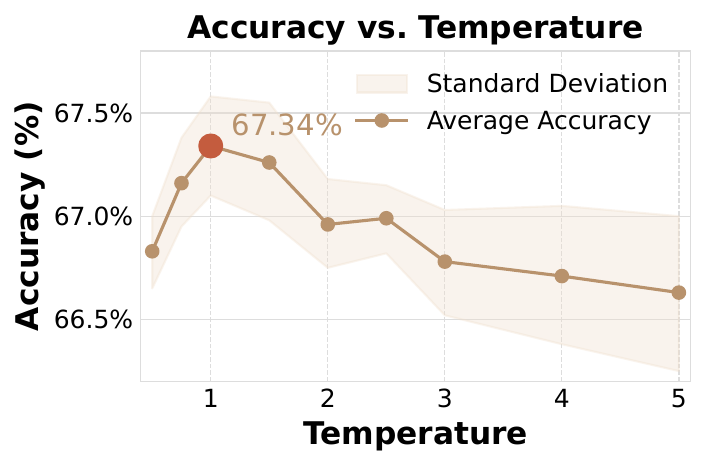}
    \vspace{-1.5em}
    \caption{\footnotesize
        The variation of inference accuracy as the temperature $\tau$ changes, using CLIP-ViT-B/16.
    }
    \vspace{-3em}
    \label{fig:temp}
\end{wrapfigure}

\textbf{Temperature Sensitivity}.
CARPRT uses a temperature $\tau$ (\eqref{equ:softweght}) to adjust prompt weight distributions. 
As shown in Fig.~\ref{fig:temp}, $\tau=1.0$ balances relevance and diversity, emphasizing useful prompts while preserving ensemble variety for generalization. 
Lower $\tau < 1.0$ concentrates weights on dominant prompts but reduces diversity, whereas higher values flatten the distribution. 
Although finding a single best hyperparameter for all zero-shot tasks is difficult, $\tau = 1.0$ is a stable choice across tasks,  showing that calibrated reweighting helps without extensive per-task tuning.
See App.~\ref{Asec:hyper} for details.

\subsection{Extended Evaluations and Class-Specific Weight Visualizations}
We explore CARPRT's versatility further with additional experiments (detailed in App.~\ref{Asec:iter},\ref{Asec:combine},\ref{Asec:add}).

\textbf{Refined Pseudo-Labels and Weight Estimation}.
CARPRT's gains vary by dataset, partly due to the quality of initial pseudo-labels from the base VLM.
Motivated by this observation, we further examine filtering low-confidence pseudo-labels (confidence-/entropy-based) to improve pseudo-label quality, but observe only marginal and inconsistent gains, suggesting that explicit filtering is unnecessary in practice (App.~\ref{Asec:filtering}).
Instead, iterative refinement yields steady improvements by progressively leveraging increasingly accurate class information (App.~\ref{Asec:iterative_results}).

\textbf{Does Prompt Quality Matter?}
While CARPRT is designed for {\em generic} prompt pools, it could further benefit from higher-quality, potentially domain-specific prompt templates.
Preliminary tests with LLM-generated prompts showed improved CARPRT performance compared to using only dataset-agnostic templates from~\citet{allingham2023simple} (App.~\ref{Asec:template}), suggesting that CARPRT effectively leverages the information in {\em any} given prompt set.
While it is difficult to evaluate the ``prompt quality'', we argue that investing in careful prompt engineering is likely to be beneficial.

\textbf{CARPRT beyond Zero-shot Classification as a General-Purpose Plug-In}.
We lastly show CARPRT's versatility as a component to enhance various VLM adaptation settings:
(i) with {\em test-time adaptation}~\citep{karmanov2024efficient}, CARPRT offers improved weight initialization (App.~\ref{Asec:tta});
(ii) with {\em image-feature focused zero-shot methods}~\citep{qian2024intra}, CARPRT enhances pseudo-labels for visual proxy learning (App.~\ref{Asec:inmap});
(iii) with {\em soft prompt tuning}~\citep{lu2022prompt}, class-aware reweighting of learned prompts can boost performance further (App.~\ref{Asec:pt});
(iv) with {\em LLM-empowered prompt augmentation}~\citep{shtedritski2023does,mirza2024meta}, the utility of high-quality generated prompts can still be improved via class-aware reweighting (App.~\ref{app:llmcomp}). 
All these results confirm CARPRT's flexibility as a general-purpose plug-in for broader VLM adaptation scenarios.

\begin{figure}
    \centering
    \includegraphics[width=1\linewidth]{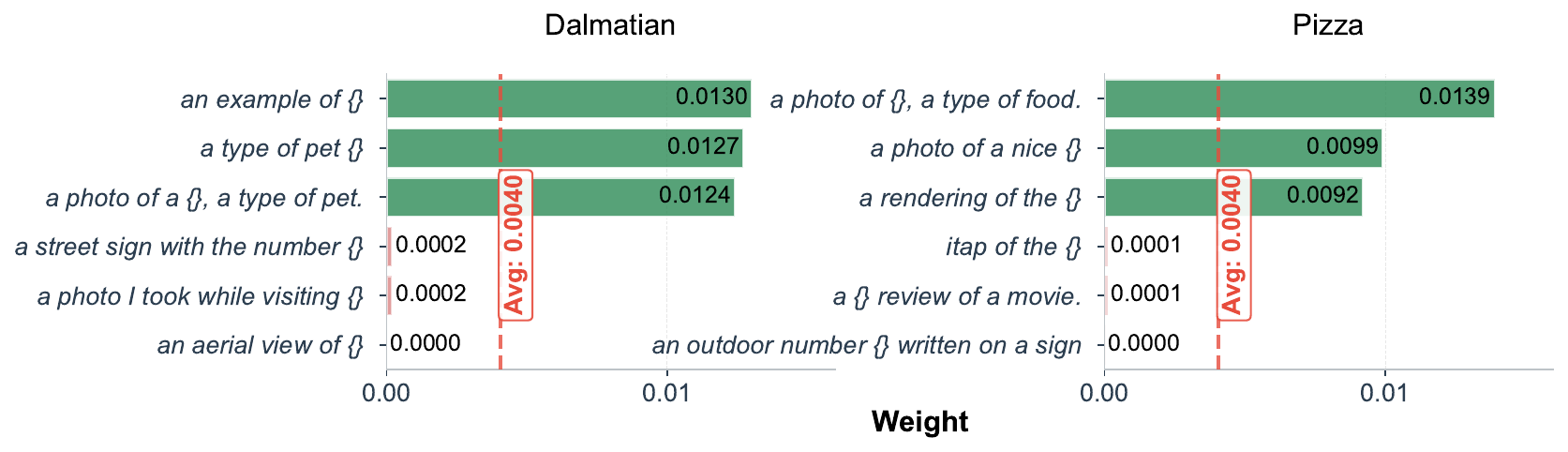}
    \vspace{-1.5em}
    \caption{\footnotesize Visualization of class-specific prompt weights on Caltech101. For dalmatian and pizza, CARPRT assigns high weights to class-relevant prompts while suppressing irrelevant ones.}
    \label{fig:qual}
\end{figure}

\textbf{Visualization of Class-Specific Prompt Weights.}
To provide qualitative insight into CARPRT’s mechanism, we visualize class-specific prompt weights on Caltech101.
Fig.~\ref{fig:qual} shows weights estimated for two representative classes, \textit{dalmatian} and \textit{pizza}.
For \textit{dalmatian}, CARPRT assigns higher weights to semantically relevant prompts (e.g., example, pet, photo) while suppressing mismatched ones (e.g., aerial, visiting, number).
Similarly, for \textit{pizza}, food-related prompts (e.g., food, photo, rendering) are prioritized, whereas unrelated terms (e.g., sign, movie, itap) are down-weighted.
Overall, these visualizations corroborate our quantitative results and illustrate that CARPRT learns class-dependent prompt preferences; additional examples are provided in App.~\ref{Asec:vis}.

\section{Discussion and Future Outlook}

\textbf{Broader Related Works}.
The performance of VLM adaptation in downstream classification tasks is relevant to the text prompt, motivating research on improving prompt effectiveness in {\em different directions}.
\emph{Prompt tuning}~\citep{zhou2022learning, khattak2023maple} optimizes task-specific soft prompts through training, but departing from zero-shot settings.
\emph{Unsupervised transfer learning} methods~\citep{qian2024intra} aim to bridge domain gaps between visual and textual embeddings without labels; they do not focus on combining multiple prompts.
\emph{Augmentation-based weighting} instead relies on large-scale data augmentation, such as using LLMs to generate task-specific prompts or building partial image views, then assigning weights to augmented prompts or views~\citep{zhu2024awt,li2024visual}; while powerful, they necessitate the availability of external computing resources. 
In contrast, CARPRT explicitly addresses the setting of {\em prompt ensembling} with a fixed, potentially task-irrelevant prompt pool.
It is entirely \textit{training-free}, relies on neither label supervision nor LLM-generated prompts, and focuses on reweighting existing prompts to capture class-specific relevance. 
This makes CARPRT {\em orthogonal} to the above directions, while also complementary to them, offering a unique perspective on VLM adaptation.
We discuss these related works in detail in App.~\ref{Asec:rw}.

\textbf{Summary}.
This study focused on prompt ensembling and confirmed that class-aware prompt reweighting is not only beneficial but essential for improving the efficacy of VLMs across a variety of downstream classification tasks.
By moving beyond uniform weighting, we showed that adapting weights to better reflect the class-specific characteristics leads to measurable gains in performance.
We hope this study encourages further exploration of integrating class-awareness with other VLM adaptation techniques to enhance across a wider range of applications.





\subsection*{Ethics Statement}
All authors have read and agree to abide by the ICLR Code of Ethics and Code of Conduct. 
This work does not involve sensitive personal data or experiments with human subjects. 
We have taken care to ensure that the datasets used are publicly available and widely adopted in prior research, 
and that the proposed method does not raise foreseeable ethical concerns. 
All claims and findings are reported honestly and transparently.

\subsection*{Reproducibility Statement}
We are committed to ensuring the reproducibility of our results. 
Detailed descriptions of the setup, training and evaluation protocols, implementation details, and hyperparameter settings are provided in Sec.~\ref{sec:exp} and App.~\ref{Asec:exp}. 
All experiments are conducted on publicly available datasets, which are listed in App.~\ref{Asec:data}. 
An anonymous code link is supplied to facilitate replication.

\subsubsection*{Author Contributions}
All authors contributed to the research and the writing of this paper.

\subsubsection*{Acknowledgments}
We thank the anonymous reviewers and area chairs for their constructive feedback. MS was supported by JST ASPIRE Grant Number JPMJAP25B1. Jianzhong Qi is supported in part by the Australian Research
Council (ARC) via Discovery Project
DP240101006 and Future Fellowship FT240100170. Feng Liu is supported by the Australian Research Council (ARC) with the grant number DE240101089, LP240100101, DP230101540 and the NSF\&CSIRO Responsible AI program with grant number 2303037. 

\bibliography{iclr2026_conference}
\bibliographystyle{iclr2026_conference}

\newpage
\appendix
\section{Detailed Discussion on Related Works}
\label{Asec:rw}

\paragraph{Prompt Tuning Methods.}
Prompt tuning adapts a pre-trained model by introducing learnable embeddings (prompt tokens) at the input stage.
These tokens can be instantiated as textual prompts or visual prompts, enabling task-specific adaptation through the model's input interface.
CoOp first applied prompt tuning to CLIP by optimizing learnable prompts in the text branch for few-shot recognition~\citep{zhou2022learning}.
To address CoOp's limited generalization, CoCoOp conditionally generates prompts from visual features~\citep{zhou2022conditional}.
MaPLe further extends prompt tuning to both the vision and text branches to improve transferability~\citep{khattak2023maple}.
Building on MaPLe, PromptSRC enhances prompt learning by leveraging descriptive text generated by large language models (LLMs), e.g., GPT-4~\citep{khattak2023self}.
However, these tuning-based approaches require optimizing learnable variables (and typically labeled downstream data, even in the few-shot regime), which falls outside our strict zero-shot, training-free setting.
We therefore do not include them as baselines for CARPRT.

\paragraph{Unsupervised Transfer Methods for VLMs.}
Unsupervised transfer for VLMs aims to adapt pre-trained models (e.g., CLIP) to downstream tasks without ground-truth labels.
Existing methods can be broadly grouped into two directions.
The first direction, exemplified by Weighted Prompt Ensembling (WPE)~\citep{allingham2023simple}, focuses on \emph{automatically reweighting prompts from a given template pool}.
By assigning dataset-level importance weights to different templates, such methods help identify which prompts are more compatible with a downstream task and can offer a degree of interpretability~\citep{allingham2023simple}.
The second direction relies on transductive learning using image features to construct classifiers, such as InMaP~\citep{qian2024intra}.
These methods often achieve stronger accuracy by exploiting structure in the unlabeled images, but are typically less interpretable since decisions are driven primarily by image-feature learning rather than explicit prompt contributions.
Our work follows the first direction by retaining the prompt-based formulation while improving accuracy via class-specific reweighting.
Moreover, the pseudo-labels produced by CARPRT can also benefit image-centric transductive pipelines; see App.~\ref{Asec:inmap} for detailed results.

\paragraph{View-aware Weighting Approaches.}
View-aware methods improve zero-shot transfer by aggregating evidence from multiple augmented visual or textual views and weighting them by confidence or alignment.
WCA performs local visual prompting by pooling similarities between cropped regions and fine-grained textual descriptions using weighted aggregation~\citep{li2024visual}.
AWT combines diverse image augmentations with LLM-generated prompts and computes weights across views, followed by optimal transport for cross-modal alignment~\citep{zhu2024awt}.
While effective, these approaches may rely on external resources (e.g., LLMs) and/or costly augmentations at inference.
In contrast, CARPRT derives class-specific weights directly from image--text similarity scores under a fixed prompt pool, without external models or additional augmentations, yielding substantially lower inference overhead while remaining complementary to view-aware techniques.

\paragraph{Test-time Adaptation.}
Test-time adaptation (TTA) aims to adapt models to unlabeled test data~\citep{ganin2016domain,long2015learning,zhang2022memo}.
Broadly, TTA methods can be categorized into training-based and training-free approaches.
Training-based methods update model parameters or prompts using test-time objectives (e.g., entropy minimization), as in TENT~\citep{wang2020tent}, or incorporate additional regularization to preserve alignment, as in CoTTA~\citep{chen2022contrastive}.
For VLMs, prompt-centric TTA methods such as TPT fine-tune a learnable prompt at test time~\citep{shu2022test}, and DiffTPT further leverages diffusion models to enrich test-time augmentations~\citep{feng2023diverse}.
Training-free TTA methods instead rely on adjusting normalization statistics or test-time augmentation without updating model parameters~\citep{li2016revisiting,karmanov2024efficient}.
Since CARPRT is also training-free and uses only unlabeled test data, we study its relationship with training-free TTA in App.~\ref{Asec:tta}.
\section{Different Problem Setup for VLMs Adaptation}
\label{Asec:problem_setup_more}

Prompt ensembling, as formalized in Problem~\ref{pro:pe}, targets a strictly zero-shot inference setting where the only available resources are a fixed prompt template set $\mathbb{P}$ and an unlabeled test set $\mathbb{D}$. No learnable parameters, task-specific fine-tuning, or external supervision are permitted. This setting is entirely inference-time, model-free, and tuning-free.

In contrast, other VLM adaptation paradigms operate under more relaxed assumptions, either by enabling trainable components, leveraging supervision, or utilizing additional knowledge sources. We outline the key differences as follows:

\textbf{Prompt Tuning} relaxes the “no training” constraint by introducing learnable prompt tokens, typically optimized using downstream supervision. Formally, the prompt becomes a learnable function $p_\theta(y_c)$ with parameters $\theta$, where $\theta$ is optimized on labeled data $\{(x_j, y_j)\}$. CoOp~\cite{zhou2022learning} learns a global soft prompt, while CoCoOp~\cite{zhou2022conditional} further conditions it on image embeddings $f(x)$ to improve generalization. These methods trade interpretability for adaptability and require supervision at training time.

\textbf{LLM-Generated Descriptions} expand the prompt space $\mathcal{P}$ using external generative models. Rather than fixing $\mathcal{P}$ a priori, a large language model $g_{\text{LLM}}$ generates class descriptions $\tilde{p}_i(y_c) = g_{\text{LLM}}(y_c)$ that are often more expressive and context-aware~\cite{menon2023visual}. While such prompts can improve alignment, this introduces non-negligible computational overhead and reduces reproducibility, especially when prompts are generated on-the-fly.

\textbf{Image-Centric Adaptation} bypasses prompt usage entirely by constructing classifiers purely from image features. Methods like InMaP~\cite{qian2024inmap} rely on clustering method to construct a label assignment function $h: \mathcal{X} \rightarrow \mathcal{Y}$ without accessing any textual information. These methods often outperform prompt-based approaches in raw accuracy but offer limited interpretability and are incompatible with text-conditioned decision-making.

\vspace{0.5em}
\noindent
CARPRT operates strictly within the constraints of Problem~1. Unlike the above paradigms, it does not rely on any learnable components, LLM-generated text, or image-only inference. Instead, it focuses on exploiting the class-specific alignment between $\mathcal{P}$ and $\mathcal{Y}$ in a training-free, interpretable, and modular fashion. As demonstrated in App.~\ref{Asec:combine}, its output (pseudo-labels and weights) can directly benefit and enhance downstream methods in both prompt tuning and image-centric learning pipelines.

\section{Datasets, Baseline Methods, and Implementation}

\subsection{Datasets}
\label{Asec:data}

\paragraph{Fine-grained Datasets.}
Following \citet{zhou2022learning}, we evaluate our method on 10 fine-grained classification benchmarks:
Caltech101~\citep{fei2004learning} (101 object categories);
DTD~\citep{cimpoi2014describing} (47 describable texture attributes such as ``bumpy'' and ``scaly'');
EuroSAT~\citep{helber2019eurosat} (10 land-use classes from satellite imagery, e.g., residential, forest, and river);
FGVC-Aircraft~\citep{maji2013fine} (100 aircraft variants);
Food101~\citep{bossard2014food} (101 food categories);
Flowers102~\citep{nilsback2008automated} (102 flower species);
Oxford Pets~\citep{parkhi2012cats} (37 pet breeds);
Cars196~\citep{krause20133d} (196 car models);
SUN397~\citep{xiao2010sun} (397 scene categories);
and UCF101~\citep{khurram2012ucf101} (101 human action classes).

\paragraph{ImageNet and its Variants.}
Following \citet{allingham2023simple}, we additionally evaluate on ImageNet and its commonly used robustness variants:
ImageNet~\citep{russakovsky2015imagenet} (1{,}000 classes);
Tiny-ImageNet~\citep{le2015tiny} (a 200-class subset);
ImageNet-A~\citep{hendrycks2021natural} (naturally adversarial examples);
ImageNet-R~\citep{hendrycks2021many} (renditions such as paintings and cartoons);
ImageNet-Sketch~\citep{wang2019learning} (sketch-style images);
and ImageNet-V2~\citep{recht2019imagenet} (a re-collected test set).

\subsection{Baselines}
\label{Asec:baseline}

To evaluate our method under a consistent setting, we compare CARPRT with representative baselines that operate within the same zero-shot classification protocol and fixed prompt set (Problem~\ref{pro:pe}).

\textbf{Mean Prompt Ensembling (MPE).}
MPE averages predictions across prompts with uniform weights. For each class, the model encodes all prompted class texts and averages them to form a class prototype. At test time, an image is classified by cosine similarity to these averaged embeddings. This baseline assumes prompts contribute equally, regardless of class or semantics.

\textbf{Majority Vote.}
Majority Vote treats each prompt as an independent voter. For each prompt, the model predicts the most similar class for an image, and the final prediction is determined by majority voting across prompts. This method ignores prediction confidence and assumes all prompt votes carry equal importance.

\textbf{Weighted Prompt Ensembling (WPE)~\citep{allingham2023simple}.}
WPE estimates a global prompt-weight vector from the unlabeled test set and aggregates prompt-conditioned class embeddings accordingly (e.g., via an unsupervised objective such as entropy minimization). However, WPE uses a single weight vector shared across all classes and thus cannot capture class-specific variations in prompt relevance.

\subsection{Details regarding Experiments}
\label{Asec:exp}

\textbf{Implementation Details.}
We implement all methods using PyTorch~1.7.1 and Python~3.7.6, and run experiments on a single NVIDIA A100 Tensor Core GPU.
We use OpenAI CLIP~\citep{radford2021learning} and DeCLIP~\citep{li2021supervision} as the underlying VLM backbones.
Our code is available at~
\href{https://github.com/tmlr-group/CARPRT}{\texttt{https://github.com/tmlr-group/CARPRT}} for reproducibility.

\textbf{Hyperparameter Settings.}
Unless otherwise specified, we set $\tau=1.0$ for fine-grained datasets and $\tau=1.5$ for ImageNet~\citep{russakovsky2015imagenet} and its variants, and use a batch size of 512 for all experiments.
\section{More Details of CARPRT}
\label{Asec:more_carprt_details}

\subsection{CARPRT Algorithm}
\label{Asec:carprt_algorithm}
We summarize the overall procedure of our proposed Class-Aware Prompt Reweighting (CARPRT) in Algorithm~\ref{alg:CPE}. As shown in the algorithm, CARPRT begins by encoding both image and text embeddings using a pre-trained CLIP-liked model. It then computes the relevance score between image features and prompt-conditioned text features, followed by pseudo-label assignment. Finally, a class-aware weight matrix is derived based on the computed scores, enabling the construction of a refined prompt weight matrix that improves zero-shot classification performance.
\begin{algorithm}[!t]\label{alg:main}
        \caption{Class-Aware Prompt Reweighting (CARPRT)} \label{alg:CPE}
{\bf Input:} A pre-trained VLM model $\Phi(\vx, p_i(y_c))$ that returns a cosine similarity score\footnotemark,
a prompt set $\sP$, an unlabeled dataset $\sD$, a candidate label space $\gY$, the temperature parameter $\tau$ and the normalization scale $\lambda$. \\
{\bfseries 1: Construct} prompted-class texts $p_i(y_c), \forall p_i \in \sP, \forall y_c \in \gY$; \\
{\bfseries 2: Obtain} the relevance score set
$\sS = \{ s_{j,i,c}\}_{j=1, i=1, c=1}^{m, n, C}$, by querying the scorer:
    $s_{j,i,c}=\Phi(\vx_j, p_i(y_c))$; \\
{\bfseries 3: Obtain} the pseudo-label set: $\hat{\sY}=\{\hat{y}_{j,i}\}_{j=1,i=1}^{m,n}$; \\
{\bfseries 4: Derive} the weight matrix $\rmW^*$ by~\eqref{equ:weight} and~\eqref{equ:softweght}; \\
{\bf Output:} a class-aware prompt weight matrix $\rmW^*$.
\end{algorithm}

\footnotetext{We write CARPRT using a black-box scoring function $\Phi$ since the method only requires access to similarity scores (or logits) $s_{j,i,c}$ from a forward pass. Explicit access to the image/text encoders (and embeddings) is only needed if one wishes to optimize alternative objectives beyond such score queries (e.g., non-cosine similarity losses or representation-level regularizers).}

\subsection{Connecting CARPRT formulation with the probabilistic framework}
\label{Asec:carprt_prob_framework}
We now detail the correspondence between the CARPRT formulation (Section \ref{sec:method}) and the probabilistic framework established in Section \ref{sec:theore}.

Concretely, the practical implementation
\eqref{equ:clip_score}--~\eqref{equ:softweght} align with~\eqref{eq:weights_posterior}--~\eqref{eq:rel_llh} in the following manner.

\textbf{Score Calculation}.
\eqref{equ:clip_score} implements the likelihood term $\Pr(\boldsymbol{x}_j|y_c, W, \mathbb{P})$ from \eqref{eq:rel_llh} by defining $s_{j,i,c} = \frac{\exp(a_{j,i,c} / \lambda)}{ \sum_{y \in \mathcal{Y}} \exp(a_{j,i,c} / \lambda)}$.
This formulation aligns with the EBM in \eqref{eq:rel_llh} by using cosine similarity $a_{j, i, c}$ as the negative energy term and normalizing through softmax to obtain proper probabilities.

\textbf{Weight Calculation}.
\eqref{equ:weight}--~\eqref{equ:softweght} correspond to estimating $\Pr(W | \mathbb{P}, \mathbb{D})$ from \eqref{eq:sample_llh} through a two-step process.
\eqref{equ:weight} first obtains the pseudo-labels for samples as the empirical estimates $\widehat{\Pr}(y_c | W, \mathbb{P})$ (i.e., \eqref{eq:class_prior_emp}). It then estimates intermediate weights by aggregating scores across pseudo-labeled samples by multiplying the scores $\Pr(\boldsymbol{x}_j|y_c, W, \mathbb{P})$ (i.e., $s_{j,i,c}$) with $\widehat{\Pr}(y_c | W, \mathbb{P})$.
\eqref{equ:softweght} applies softmax to ensure the resulting weights form a valid probability distribution over prompts for each class, which satisfies the simplex constraint implied by our probabilistic framework.

\section{Details of CARPRT with Iterative Refinement (iCARPRT)}
\label{Asec:iter}
\subsection{Methods}
\label{Asec:iterative_method}
In this section, we introduce \emph{iterative class-aware prompt reweighting } (iCARPRT). Unlike the single‐pass approach described in the main text, iCARPRT refines pseudo‐labels and class‐aware prompt weights through multiple rounds of alternating updates. The procedure consists of the following two main steps: pseudo‐label generation and class-aware weight estimation.

In pseudo-label generation, the pseudo-label $\hat{y}_j$ of the image $\vx_j$ is computed using the prompt weights estimated in the previous iteration, $\rmW^{t-1}$, as:
\begin{equation}
    \label{equ:p_label}
    \begin{aligned}
        \hat{y}_{j}= \arg\max_{y_c \in \mathcal{Y}} \, \sum_{i=1}^{n} w^{t-1}_{i,c} s_{j,i,c}
    \end{aligned}
\end{equation}

where $s_{j,i,c}$ is the relevance score computed in \eqref{equ:clip_score}. Once the pseudo‐labels $\hat{y}_j$ are updated, the intermediate weights $w'_{i,c}$ are estimated by:
\begin{equation}
    \label{equ:re_weight}
    \begin{aligned}
        w^{\prime}_{i,c}=\frac{\sum\nolimits_{j=1}^{m}  s_{j,i,c} \mathbbm{1}_{\hat{y}_{j} = y_c}}{\sum_j \mathbbm{1}_{\hat{y}_{j} = y_c}}.
    \end{aligned}
\end{equation}
where \( \mathbbm{1}_{\hat{y}_{j} = y_c} \) is an indicator function that is 1 if $\hat{y}_{j} = y_c$, and 0 otherwise. Then the final weights \( w_{i,c}^* \) are computed from the intermediate weights $w'_{i,c}$ using \eqref{equ:softweght}.

These two steps repeat until a predefined maximum number of iterations is reached. By alternating between pseudo‐label prediction and weight re‐estimation, iCARPRT creates a reinforcing cycle that continuously improves both the pseudo‐labels and the class‐aware prompt weights.

\begin{algorithm}[!t] \label{alg: icarprt}
    \footnotesize
    \caption{Iterative Class-Aware Prompt Reweighting (iCARPRT)}
    {\bf Input:} A pre-trained VLM model $\Phi(\vx, p_i(y_c))$ that returns a cosine similarity score,
    a prompt set $\sP$, an unlabeled dataset $\sD$, a candidate label space $\gY$, the maximum iterations $T_{max}$,
    the temperature parameter $\tau$ and the normalization scale $\lambda$. \\
    {\bfseries 1: Generate} prompted-class texts $p_i(y_c), \forall p_i \in \sP, \forall y_c \in \gY$; \\
    {\bfseries 2: Obtain} the relevance score set $\sS = \{ s_{j,i,c}\}_{j=1, i=1, c=1}^{m, n, C}$ by querying the scorer:
    $s_{j,i,c}=\Phi(\vx_j, p_i(y_c))$; \\
    {\bfseries 3: Initialize} the class-aware weights $w_{i,c}^{(0)}$ uniformly; \\
    \For{$t = 1$ to $T_{max}$}{
        {\bfseries 4: Obtain} the pseudo-labels set: $\hat{\sY}=\{\hat{y}_{j} \}_{j=1}^{m}$ using \eqref{equ:p_label}; \\
        {\bfseries 5: Derive} the weight matrix $\rmW^{t}$ by~\eqref{equ:re_weight} and~\eqref{equ:softweght}; \\
    }
    {\bf Output:} a class-aware prompt weight matrix $\rmW^*=\rmW^{T_\mathrm{max}}$.
\end{algorithm}

\subsection{Experimental Results}
\label{Asec:iterative_results}
We evaluate the performance of iCARPRT against the single‐pass version, CARPRT. 
As shown in Tab.~\ref{tab:icarprt}, the results demonstrate that iCARPRT achieves improvements in mean accuracy across different backbones.
This suggests that the iterative refinement process effectively enhances class-aware prompt weighting by progressively improving pseudo-label quality and weight estimation.

\textbf{Quality of Pseudo Labels Matters}.
In datasets such as EuroSAT and Aircraft, iCARPRT does not outperform CARPRT. A possible reason is the relatively low initial pseudo-label accuracy in these datasets. Since iCARPRT updates prompt weights based on pseudo-labels in each iteration, a poor starting point may lead to reinforcement of incorrect labels rather than improvement. In such cases, the iterative updates fail to enhance pseudo-label quality, limiting the effectiveness of the approach.

\begin{table*}[!t]
\caption{Accuracy (\%) comparison between CARPRT and iCARPRT on various fine-grained classification datasets using CLIP-ViT-B/16 and CLIP-ResNet50 backbones. \textbf{Bold} values indicate the highest accuracy.}
\centering
\begin{adjustbox}{max width=\textwidth}
\begin{tabular}{lcccccccccccc}
\toprule
& Caltech101 & DTD & EuroSAT & Aircraft & Food101 & Flower102 & Pets & Cars & SUN397 & UCF101 & Average \\
\midrule
\multicolumn{12}{c}{CLIP-ViT-B/16} \\
\midrule

CARPRT                      & 94.16       & \textbf{48.90}       & \textbf{55.56}       & \textbf{24.49}       & 86.31       & 71.36       & 89.13       & 66.14       & 66.93       & 70.41      & 67.34       \\
iCARPRT  & \textbf{94.27} & 48.14 & 54.79 & 23.71 & \textbf{87.25} & \textbf{72.01} & \textbf{89.64} & \textbf{67.19} & \textbf{67.28} & \textbf{70.53} & \textbf{67.48}\\
\midrule
\multicolumn{12}{c}{CLIP-ResNet50} \\
\midrule
CARPRT & 88.46 & 41.31 & \textbf{36.84} & \textbf{16.88} & 76.88 & 65.56 & 85.69 & 56.44 & 61.28 & 63.66 & 59.30 \\
iCARPRT & \textbf{89.14} & \textbf{41.83} & 35.65 & 15.42 & \textbf{77.96} & \textbf{66.13} & \textbf{86.09} & \textbf{57.28} & \textbf{61.45} & \textbf{64.32} & \textbf{59.53} \\
\bottomrule
\end{tabular}
\end{adjustbox}
\label{tab:icarprt}
\end{table*}

\section{Combining CARPRT with Other Vision-Language Methods}
\label{Asec:combine}
While CARPRT focuses on strict zero-shot image classification with a fixed set of handcrafted prompts and unlabeled data (Problem~\ref{pro:pe}), it is inherently modular and can be integrated into a wide range of existing vision-language pipelines.  Although direct comparison is not meaningful due to differing problem assumptions, we show that CARPRT can function as a complementary component rather than a competing method.

Specifically, we conduct case studies in three representative scenarios. We first combine CARPRT with a test-time adaptation method, then apply it to augment soft prompt tuning, and finally integrate it with a recent zero-shot method that leverages LLM-generated prompts. Details and results for each case are presented in the following subsections.

\subsection{Combining CARPRT with Test-time Adaptation Method}
\label{Asec:tta}

\begin{table}[!t]
    \centering
    \caption{Accuracy (\%) comparison between baselines and CARPRT when combined with TDA, using CLIP-ViT-B/16 and CLIP-ResNet50 backbones. \textbf{Bold} values represent the highest accuracy in each column.}
    \begin{adjustbox}{max width=\textwidth}
        \begin{tabular}{lcccccccccccc}
            \toprule
                                      & Caltech101     & DTD            & EuroSAT        & Aircraft       & Food101        & Flower102      & Pets           & Cars           & SUN397         & UCF101         & Average        \\
            \midrule
            \multicolumn{12}{c}{CLIP-ViT-B/16}                                                                                                                                                                                   \\
            \midrule
            
            MPE              & 93.18          & 46.75          & 60.60          & 23.37          & 86.04          & 65.61          & 84.21          & 67.44          & 66.41          & 71.48          & 66.51          \\
            WPE                       & 93.49          & 47.02          & 62.48          & 23.09          & 86.21          & 68.10          & 84.12          & 67.23          & 66.98          & 71.23          & 67.00          \\
          
           \rowcolor{green!20} CARPRT (Ours) & \textbf{94.62}          & \textbf{48.52} & \textbf{63.95} & \textbf{24.05} & \textbf{86.50} & 70.36          & 84.50          & \textbf{67.83} & \textbf{68.06} & \textbf{71.85} & \textbf{68.02} \\
             Human Selection (TDA)              & 94.24 & 47.40          & 58.00          & 23.91          & 86.14          & \textbf{71.42} & 88.63 & 67.28          & 67.62          & 70.66          & 67.53          \\
            \midrule
            \multicolumn{12}{c}{CLIP-ResNet50}                                                                                                                                                                                   \\
            \midrule
           
            MPE              & \textbf{92.03} & 41.77          & 54.56          & 19.77          & 83.41          & 62.50          & 80.65          & 63.55          & 64.14          & 68.80          & 63.12          \\
            WPE                       & 91.67          & 41.89          & 56.78          & 19.84          & 83.21          & 56.67          & 81.66          & 63.43          & 64.87          & 68.72          & 63.45          \\
            \rowcolor{green!20} CARPRT (Ours) & 91.75          & \textbf{42.71} & \textbf{57.65} & 19.98          & \textbf{83.61} & 62.66          & 81.38          & \textbf{65.98} & \textbf{65.98} & \textbf{68.65} & \textbf{63.76} \\
             Human Selection (TDA)              & 91.42          & 41.00          & 56.97          & \textbf{20.55} & 83.34          & \textbf{62.75} & \textbf{83.62} & 64.14          & 65.86          & 68.52          & 63.82          \\
            \bottomrule
        \end{tabular}
    \end{adjustbox}
    \label{tab:tda}
\end{table}

CARPRT can be integrated with TDA, a state-of-the-art training-free test-time adaptation (TTA) method for CLIP that enables efficient adaptation without backpropagation~\citep{karmanov2024efficient}.

Our approach is not in conflict with TDA but is orthogonal to it. While TDA uses a human-selected prompt pool for each task, our method can serve as a complementary module that replaces this human selection pool, providing an alternative way of selecting prompts without requiring human intervention. This allows our method to work alongside TDA, enhancing the adaptability of vision-language models in a more automated manner. We conduct the experiment to compare the performance of our method with several baselines, including the human-selected prompts, the equal weight prompt selection, WPE, all combined with the TDA method. The results are evaluated using both CLIP-ViT-B/16 and CLIP-ResNet50 backbones across ten fine-grained datasets, as shown in Tab.~\ref{tab:tda}.

From the result, we can observe that our method outperforms the other baselines in several datasets, achieving the highest average accuracy of 67.96\% for CLIP-ViT-B/16 and 63.76\% for CLIP-ResNet50. Specifically, for datasets like EuroSAT, Food101, and Flower102, our method shows significant improvements over the human-selected and WPE baselines.  These improvements demonstrate that our approach effectively enhances the performance of TTA methods, by offering a more efficient prompt selection strategy. However, there are cases where it falls short compared to human-selected prompts. This may be caused by the limited diversity and smaller size of the template pool, where automatic reweighting methods may not perform as well as direct human selection. However, the automated approach significantly reduces the human labor cost. This experiment demonstrates the promising future of our method—not only in prompt reweighting but also as a technique that can be integrated into other vision-language model (VLM) transfer learning approaches. The ability to automatically adjust prompts in a computationally efficient manner paves the way for broader applications and adaptability in various VLM-based tasks.

\textbf{Posterior Update with TTA.}
When prompt weights can be updated continuously, such as in TTA settings, different priors (e.g., uniform, global Dirichlet, or class-specific Dirichlet) define initial beliefs about weight distributions before observing test data.
In the TTA scenario, test data arrives as a stream: $\{ \vx^{(0)}, \dots, \vx^{(t)}, \vx^{(t + 1)}, \dots \}$.
Based on \eqref{eq:sample_llh}, we have a general form of posterior
$$p(\rmW|\vx^{(t)}, \sP) \propto p(\vx^{(t)}|\rmW,\mathbb{P})p(\rmW|\sP),$$
where $p(W|\mathbb{P})$ is the prior, $p(\vx^{(t)}|\rmW,\mathbb{P})$ is the likelihood from test data, and $p(W|\vx^{(t)}, \sP)$ is the posterior that guides weight updates sample-by-sample.
The posterior updating process follows:

For first test sample $\vx^{(0)}$:
$$
    \begin{aligned}
        \mathrm{Prior}:      & \; p(\rmW|\mathbb{P})                                                               \\
        \mathrm{Likelihood}: & \; p(\vx^{(0)}|\rmW,\mathbb{P})                                                     \\
        \mathrm{Posterior}:  & \; p(W|\vx^{(0)},\mathbb{P}) \propto p(\vx^{(0)}|\rmW,\mathbb{P})p(\rmW|\mathbb{P})
    \end{aligned}
$$
Then, as we observe the second test sample $\vx^{(1)}$, we have
$$
    \begin{aligned}
        \mathrm{Prior}:      & \; p(\rmW|\vx^{(0)},\mathbb{P}) \; \text{ (previous posterior)}                                             \\
        \mathrm{Likelihood}: & \; p(\vx^{(1)}|\rmW,\mathbb{P})                                                                             \\
        \mathrm{Posterior}:  & \; p(\rmW|\vx^{(0)}, \vx^{(1)},\mathbb{P}) \propto p(\vx^{(1)}|\rmW,\mathbb{P})p(\rmW|\vx^{(0)},\mathbb{P})
    \end{aligned}
$$
This leads to the sequential update scheme, formulated as
$$
    p(\rmW|\vx^{(0)}, \dots ,\vx^{(t)}, \mathbb{P}) \propto p(\vx^{(t)}|\rmW,\mathbb{P}) p(\rmW|\vx^{(0)},...,\vx^{(t-1)},\mathbb{P})
$$

Thus, in TTA settings, these priors can be (1) initialized based on initial test samples; and (2) updated sequentially as new test samples arrive.

More specifically, choosing different prior distributions would lead to different updating computations.

\textit{Uniform Prior}.
Recall the uniform prior is defined as
$$
    p(W|\mathbb{P}) = \begin{cases}
        \frac{1}{|\gW|} & \text{if } W \in \gW \\
        0               & \text{otherwise}
    \end{cases}
$$
By taking log to both LHS and RHS, we will have
$$
    \log p(\rmW|\mathbb{P}) = \begin{cases}
        -\log |\gW| & \text{if } \rmW \in \gW \\
        -\infty     & \text{otherwise}
    \end{cases}
$$
which then leads to the log posterior to be expressed as
$$
    \begin{aligned}
        \log p(\rmW|\vx^{(t)}, \mathbb{P}) & \propto -\log |\gW| +  \log \sum_{y_c \in \mathcal{Y}} p(\vx^{(t)}|y_c, \rmW, \mathbb{P})p(y_c|\rmW, \mathbb{P})                                                                                                                                                      \\
                                           & = -\log |\gW| + \log \sum_{y_c \in \mathcal{Y}} \exp \left( \sum_{i=1}^n \left( w_{i,c} \vz^{\mathrm{T}}_{i,c}\right)^{\top} \cdot \vz^\mathrm{I} \right) \cdot \frac{\mathbbm{1}_{\hat{y}_{ji} = y_c}}{\sum_{j^{\prime}} \mathbbm{1}_{\hat{y}_{j^{\prime} i} = y_c}}
    \end{aligned}
$$

\textit{Global Dirichlet Prior}.
The global Dirichlet prior treats all weights across classes as a single vector:

$$p(W|\mathbb{P}) = \text{Dir}(\text{vec}(W)|\alpha_1, ..., \alpha_{nC})$$

where $\text{vec}(\rmW) \in \sR^{nC}$ is the vectorization of weight matrix W (here we denote $C = |\mathcal{Y}|$ as the cardinality of label space)
Similarly, we will have the log prior and posterior as
$$
    \begin{aligned}
        \log p(\rmW|\mathbb{P}) & = \log \text{Dir}(\text{vec}(\rmW)|\alpha_1, ..., \alpha_{nC})                                                                                                   \\
                                & = \log \Gamma(\alpha_0) - \sum_{k=1}^{nC} \log \Gamma(\alpha_k) + \sum_{k=1}^{nC} (\alpha_k-1)\log w_k \quad (\alpha_0 = \sum_{k=1}^{nC} \alpha_k)               \\
                                & = \log \Gamma(\sum_{k=1}^{nC} \alpha_k) - \sum_{c=1}^C \sum_{i=1}^n \log \Gamma(\alpha_{(c-1)n+i}) + \sum_{c=1}^C \sum_{i=1}^n (\alpha_{(c-1)n+i}-1)\log w_{i,c}
    \end{aligned}
$$
and
$$\begin{aligned}
        \log p(\rmW|\vx^{(t)}, \mathbb{P}) & \propto \log p(\rmW|\mathbb{P}) + \log p(\vx^{(t)}|\rmW, \mathbb{P}) - \log p(\vx^{(t)}|\mathbb{P})                                                                                                                                                           \\
                                           & = \log \Gamma(\alpha_0) - \sum_{k=1}^{nC} \log \Gamma(\alpha_k) + \sum_{c=1}^C \sum_{i=1}^n (\alpha_{(c-1)n+i}-1)\log w_{i,c}                                                                                                                                 \\
                                           & \quad + \log \sum_{y_c \in \mathcal{Y}} p(x|y_c, \rmW, \mathbb{P})p(y_c|\rmW, \mathbb{P})                                                                                                                                                                     \\
                                           & = \log \Gamma(\alpha_0) - \sum_{k=1}^{nC} \log \Gamma(\alpha_k) + \sum_{c=1}^C \sum_{i=1}^n (\alpha_{(c-1)n+i}-1)\log w_{i,c}                                                                                                                                 \\
                                           & \quad + \log \sum_{y_c \in \mathcal{Y}} \exp \left( \sum_{i=1}^n \left( w_{i,c} \vz^{\mathrm{T}}_{i,c}\right)^{\top} \cdot \vz^\mathrm{I} \right) \cdot \frac{\mathbbm{1}_{\hat{y}_{ji} = y_c}}{\sum_{j^{\prime}} \mathbbm{1}_{\hat{y}_{j^{\prime} i} = y_c}}
    \end{aligned}$$

\textit{Class-specific Dirichlet Prior}.
We again start from the prior definition
$$p(W|\mathbb{P}) = \prod_{c=1}^C \text{Dir}(W_c|\alpha_{c,1}, ..., \alpha_{c,n})$$
then turn into the log prior and posterior
$$
    \begin{aligned}
        \log p(\rmW|\mathbb{P}) & = \sum_{c=1}^C \log \text{Dir}(W_c|\alpha_{c,1}, ..., \alpha_{c,n})                                                                                                                          \\
                                & = \sum_{c=1}^C \left[\log \Gamma(\alpha_{c,0}) - \sum_{i=1}^n \log \Gamma(\alpha_{c,i}) + \sum_{i=1}^n (\alpha_{c,i}-1)\log w_{i,c} \right] \quad (\alpha_{c,0} = \sum_{i=1}^n \alpha_{c,i})
    \end{aligned}
$$
and log posterior
$$
    \begin{aligned}
        \log p(\rmW|\vx^{(t)}, \sP)
         & = \sum_{c=1}^C \left[\log \Gamma(\alpha_{c,0}) - \sum_{i=1}^n \log \Gamma(\alpha_{c,i}) + \sum_{i=1}^n (\alpha_{c,i}-1)\log w_{i,c}\right]                                                                                                                    \\
         & \quad + \log \sum_{y_c \in \mathcal{Y}} p(x|y_c, \rmW, \mathbb{P})p(y_c|\rmW, \mathbb{P})                                                                                                                                                                     \\
         & = \sum_{c=1}^C \left[\log \Gamma(\alpha_{c,0}) - \sum_{i=1}^n \log \Gamma(\alpha_{c,i}) + \sum_{i=1}^n (\alpha_{c,i}-1)\log w_{i,c}\right]                                                                                                                    \\
         & \quad + \log \sum_{y_c \in \mathcal{Y}} \exp \left( \sum_{i=1}^n \left( w_{i,c} \vz^{\mathrm{T}}_{i,c}\right)^{\top} \cdot \vz^\mathrm{I} \right) \cdot \frac{\mathbbm{1}_{\hat{y}_{ji} = y_c}}{\sum_{j^{\prime}} \mathbbm{1}_{\hat{y}_{j^{\prime} i} = y_c}}
    \end{aligned}
$$

However, since Dirichlet priors would introduce additional steps (e.g., estimating concentration parameters $\alpha$), in our preliminary investigation, we used uniform prior to keep simplicity.
Despite this simplest setup, our CARPRT prompt reweighting strategy effectively facilitated TTA methods.
We leave more systematic explorations of alternative priors (e.g., Dirichlet) into future work.

\subsection{Combining CARPRT with Soft Prompt Tuning}
\label{Asec:pt}
\emph{Soft prompt tuning} has recently become a powerful technique for adapting CLIP and other pre-trained vision-language models to downstream tasks. By learning optimal prompts that guide the model's understanding of new data, prompt tuning has shown remarkable effectiveness \citep{zhou2022learning,zhou2022conditional,khattak2023self}. ProDA optimizes prompt distributions to improve few-shot performance by training a set of learnable invisible prompt embeddings. While CARPRT is primarily designed to reweight visible prompt templates, our approach is not restricted to visible prompts. In this section, we also apply class-aware reweighting to the invisible prompts trained by ProDA, making our method capable of enhancing performance in various prompt tuning scenarios.

Our CARPRT method could enhance the ProDA framework by introducing a class-aware reweighting technique that adjusts the influence of each prompt based on the underlying class structure. Specifically, before each iteration of ProDA’s prompt distribution learning, we use CARPRT to update the weights, which then guide the model’s logit outputs for training the prompts. As the problem setting transitions from zero-shot to few-shot, our approach adapts by refining the weight estimation. Specifically, we use ground truth labels instead of the pseudo-labels for weight estimation, as shown in the following replacement for \eqref{equ:weight}:

\begin{equation}
    \label{equ:weight_fw}
    \begin{aligned}
        w^{\prime}_{i,c}=\frac{\sum\nolimits_{j=1}^{m}  s_{j,i,c} \mathbbm{1}_{{y}_{j} = y_c}}{\sum\nolimits_{j=1}^{m} \mathbbm{1}_{{y}_{j} = y_c}},
    \end{aligned}
\end{equation}

where \(y_j\) is the ground truth label of the sample \(j\).
The results shown in Table \ref{tab:proda_results} demonstrate that our method provides notable improvements in most datasets, highlighting the effectiveness of our class-aware prompt reweighting mechanism.

\begin{table}[!htbp]
    \centering
    \label{tab:proda_results}
    \caption{Accuracy (\%) comparison between our method and the \emph{prompt tuning} baseline on fine-grained datasets using the CLIP-ViT-B/16 backbone. \textbf{Bold} values represent the highest accuracy in each row.}
    \begin{tabular}{lcc}
        \toprule
                   & ProDA         & \cellcolor[gray]{0.9}{ProDA + CARPRT} \\
        \midrule
        Caltech101 & 91.3          & \cellcolor[gray]{0.9}\textbf{95.4}    \\
        DTD        & \textbf{70.1} & \cellcolor[gray]{0.9}69.6             \\
        EuroSAT    & \textbf{84.3} & \cellcolor[gray]{0.9}83.4             \\
        Aircraft   & 36.6          & \cellcolor[gray]{0.9}\textbf{36.9}    \\
        Food101    & 82.4          & \cellcolor[gray]{0.9}\textbf{88.1}    \\
        Flower102  & 95.5          & \cellcolor[gray]{0.9}\textbf{95.6}    \\
        Pets       & 90.0          & \cellcolor[gray]{0.9}\textbf{93.7}    \\
        Cars       & 75.5          & \cellcolor[gray]{0.9}\textbf{78.6}    \\
        Average    & 78.2          & \cellcolor[gray]{0.9}\textbf{80.2}    \\
        \bottomrule
    \end{tabular}
\end{table}

\subsection{Combining CARPRT with Modern Zero-Shot Methods}
\label{Asec:inmap}
Recent zero-shot approaches often rely on large language models (LLMs) to generate class descriptions or prompts. While these methods have shown strong performance, they typically introduce external information and lack mechanisms to calibrate prompt relevance across classes. CARPRT can be applied on top of such methods to reweight their prompt pools in a class-aware manner, enhancing prediction quality without modifying the model or relying on additional supervision.

Beyond prompt-based methods, CARPRT is also compatible with image-centric approaches that construct classifiers directly from visual features, such as InMaP~\citep{qian2024intra}. These two strategies are complementary: while InMaP builds a vision proxy via clustering, our method provides high-quality pseudo-labels that can guide its optimization. As shown in Tab.~\ref{tab:inmap_results}, integrating CARPRT with InMaP consistently improves performance. In particular, refining pseudo-labels using Sinkhorn distance leads to further gains, validating that better pseudo-labels directly reduce the theoretical gap between recovered and optimal vision proxies. These results highlight that CARPRT not only improves zero-shot inference on its own, but also serves as a valuable component within broader vision-language learning frameworks.

\begin{table}[htbp]
    \centering
    \caption{Accuracy (\%) comparison between our method and the baseline on ImageNet using the CLIP-ViT-B/16 and CLIP-ResNet50backbone. \textbf{Bold} values represent the highest accuracy in each row.}
    \label{tab:inmap_results}
    \begin{tabular}{lcc}
        \toprule
                      & InMaP          & \cellcolor[gray]{0.9} InMaP + CARPRT \\
        \midrule
        CLIP-ViT-B/16 &                & \cellcolor[gray]{0.9}                \\
        w/o Skinhorn  & 70.14          & \cellcolor[gray]{0.9} \textbf{71.09} \\
        Skinhorn      & 72.55          & \cellcolor[gray]{0.9} \textbf{72.57} \\
        \midrule
        CLIP-ResNet50 &                & \cellcolor[gray]{0.9}                \\
        w/o Skinhorn  & 60.83          & \cellcolor[gray]{0.9} \textbf{60.95} \\
        Skinhorn      & \textbf{63.74} & \cellcolor[gray]{0.9} 63.14          \\
        \bottomrule
    \end{tabular}
\end{table}


\newpage

\begin{table}[!t]
    \centering
    \caption{Details for the datasets in our experiments.}
    \begin{tabular}{lcc}
        \toprule
        \textbf{Dataset} & \textbf{Classes} & \textbf{Test Size} \\
        \midrule
        ImageNet         & 1000             & 50,000             \\
        Tiny-ImageNet    & 200              & 10,000             \\
        ImageNet-R       & 200              & 30,000             \\
        ImageNet-A       & 200              & 6862               \\
        ImageNet-Sketch  & 1000             & 50,889             \\
        ImageNet-V2      & 1000             & 10,000             \\
        \midrule
        Caltech101       & 100              & 2465               \\
        DTD              & 47               & 1692               \\
        EuroSAT          & 10               & 8100               \\
        Aircraft         & 100              & 3333               \\
        Food101          & 101              & 30,300             \\
        Flowers102       & 102              & 2463               \\
        Oxford Pets      & 37               & 3669               \\
        Cars196          & 196              & 8041               \\
        Sun397           & 397              & 19,850             \\
        UCF101           & 101              & 3783               \\
        \bottomrule
    \end{tabular}
\end{table}

\subsection{Combining CARPRT with LLM-empowered Prompt Augmentation Methods}
\label{app:llmcomp}
Although CARPRT and LLM-empowered prompt augmentation methods are conceptually different, they can be combined in a complementary way. 
CARPRT is a training-free and inference-only method, relying solely on a fixed prompt template pool and without using any external knowledge such as LLMs.  By contrast, CuPL~\citep{shtedritski2023does}, MPVR~\citep{mirza2024meta}, and VisDesc~\citep{menonvisual} generate class-specific prompts/descriptors via large language models and thus address a different setting. 
Importantly, these approaches are orthogonal to ours: while direct comparison is not the focus, CARPRT can reweight LLM-generated prompts, and combining them consistently brings further gains.

As shown in Tab.~\ref{tab:llm_comb}, integrating CARPRT with LLM-based prompt generation methods consistently improves their performance across datasets. 
This demonstrates that class-aware reweighting is complementary to LLM-generated prompts, enhancing their effectiveness without altering the underlying generation process. 
While VisDesc can be competitive or stronger in some cases, it requires a more complex pipeline and additional resources, whereas CARPRT provides a lightweight plug-in alternative.

\begin{table*}[t]
\centering
\caption{Accuracy (\%) comparison between LLM-based prompt generation baselines and their combinations with our method on fine-grained datasets using the CLIP-ViT-B/16 backbone. \textbf{Bold} values represent the highest accuracy in each row.}
\begin{adjustbox}{max width=\textwidth}
\begin{tabular}{lcccccccccccc}
\toprule
Method & Caltech101 & DTD & EuroSAT & Aircraft & Food101 & Flower102 & Pets & Cars & SUN397 & UCF101 & Average \\
\midrule
CuPL        & 93.68 & 50.27 & 52.69 & 25.57 & 86.71 & 71.31 & 89.10 & 65.31 & 65.13 & 70.33 & 67.01 \\
\rowcolor{green!20} CuPL+Ours   & 94.27 & 50.35 & 56.67 & 25.42 & 86.76 & 71.42 & \textbf{89.24} & 66.25 & 67.46 & 71.28 & 67.91 \\
\midrule
MPVR        & 93.98 & 50.12 & 55.47 & \textbf{26.18} & 86.89 & 72.14 & 89.07 & 66.97 & 65.24 & 70.42 & 67.65 \\
\rowcolor{green!20} MPVR+Ours   & 94.23 & 50.46 & \textbf{56.82} & 26.09 & \textbf{86.87} & \textbf{72.25} & \textbf{89.24} & 67.13 & 67.32 & \textbf{71.37} & \textbf{68.18} \\
\midrule
VisDesc     & \textbf{94.52} & \textbf{50.59} & 56.12 & 25.16 & 85.75 & 71.89 & 88.87 & \textbf{67.28} & \textbf{67.87} & 70.37 & 67.84 \\
\bottomrule
\end{tabular}
\end{adjustbox}
\label{tab:llm_comb}
\end{table*}

\section{Additional Experiments}
\label{Asec:add}

\subsection{Filtering Low-Confidence Pseudo-Labels}
\label{Asec:filtering}
We evaluate whether explicitly filtering low-confidence pseudo-labels improves performance.
We consider two heuristics: (i) confidence-based filtering by thresholding the maximum similarity score, and (ii) entropy-based filtering by thresholding class-wise prediction entropy.
As shown in Tab.~\ref{tab:filtering_ablation}, explicit filtering yields only marginal and inconsistent gains across datasets, is sensitive to the choice of threshold, and can occasionally lead to performance drops.
Overall, these results suggest that CARPRT is relatively robust to noisy pseudo-labels, and additional filtering heuristics provide limited practical benefit.

\begin{table}[t]
\centering
\caption{\footnotesize
Accuracy (\%) comparison between confidence-/entropy-based pseudo-label filtering variants and their combinations with CARPRT on fine-grained datasets. \textbf{Bold }values represent the highest accuracy in each row.}
\begin{adjustbox}{max width=\textwidth}
\begin{tabular}{llccccccc}
\toprule
Thresh. & Method & Aircraft & DTD & EuroSAT & Food101 & Pets & Caltech101 & Average \\
\midrule
\multicolumn{9}{c}{Confidence-based filtering (max score)} \\
\midrule
\multirow{3}{*}{0.30}
& WPE          & 22.28 & 47.18 & 52.37 & 85.49 & 81.46 & 92.62 & 63.57 \\
& CARPRT       & 24.10 & 47.45 & 58.31 & 85.16 & 90.06 & 94.01 & 66.52 \\
& Filter Ratio & 0.51  & 0.15  & 0.16  & 0.53  & 0.60  & 0.34  & --    \\
\midrule
\multirow{3}{*}{0.25}
& WPE          & 22.11 & 47.18 & 51.88 & 85.34 & 80.92 & 94.24 & 63.61 \\
& CARPRT       & 24.73 & 47.45 & 54.87 & 86.29 & 89.86 & 94.60 & 66.30 \\
& Filter Ratio & 0.98  & 0.77  & 0.77  & 0.97  & 0.98  & 0.81  & --    \\
\midrule
\multirow{2}{*}{0.00 (orig.)}
& WPE          & 23.28 & 47.18 & 49.60 & 86.14 & 82.38 & 93.09 & 63.61 \\
& CARPRT       & 24.49 & 48.90 & 55.56 & 86.31 & 89.13 & 94.16 & 66.43 \\
\midrule
\multicolumn{9}{c}{Entropy-based filtering (prediction entropy)} \\
\midrule
\multirow{3}{*}{2.0}
& WPE          & 21.90 & 44.80 & 51.92 & 85.41 & 92.57 & 63.11 & 63.11 \\
& CARPRT       & 23.21 & 47.63 & 53.54 & 86.31 & 94.36 & 65.82 & 65.82 \\
& Filter Ratio & 0.18  & 0.41  & 0.88  & 0.91  & 0.88  & --    & --    \\
\midrule
\multirow{3}{*}{2.5}
& WPE          & 21.95 & 45.85 & 49.60 & 85.33 & 92.61 & 62.81 & 62.81 \\
& CARPRT       & 24.20 & 48.13 & 55.56 & 86.27 & 94.69 & 66.39 & 66.39 \\
& Filter Ratio & 0.38  & 0.62  & 1.00  & 0.97  & 0.92  & --    & --    \\
\midrule
\multirow{2}{*}{max (orig.)}
& WPE          & 23.28 & 47.18 & 49.60 & 86.14 & 93.09 & 63.61 & 63.61 \\
& CARPRT       & 24.49 & 48.90 & 55.56 & 86.31 & 94.16 & 66.43 & 66.43 \\
\bottomrule
\end{tabular}
\end{adjustbox}
\label{tab:filtering_ablation}
\end{table}

\subsection{Detailed Results for Hyperparameter Analysis}
\label{Asec:hyper}
In this section, we analyze the impact of key hyperparameters across all fine-grained datasets, focusing on the temperature parameter $\tau$. In zero-shot classification, where only test data is available, conventional hyperparameter selection is inherently challenging due to the absence of training or validation data. Following \citet{shu2018dirt}, we aim to identify hyperparameters that exhibit robust and consistent performance across diverse datasets.

As shown in Tab.~\ref{tab:hyperparameter_analysis}, accuracy peaks at $\tau=1.0$ and remains stable across a broad range, with a slight decline at higher values. A lower temperature, such as 0.5, sharpens focus on the most probable prompts but reduces distribution spread, limiting the ensemble effect of 247 prompt templates. This effect is crucial for capturing diverse information cues, and excessive concentration on dominant prompts may lead to performance degradation. While $\tau=1.0$ may not be optimal for every dataset, it serves as a practical and generalizable choice under zero-shot constraints.

\begin{table}[h]
    \centering
    \caption{Accuracy(\%) results for varying temperature settings across fine-grained datasets using CLIP-ViT-B/16 and CLIP-ResNet50 backbones. \textbf{Bold} value represents the highest accuracy in each column.}
    \label{tab:hyperparameter_analysis}
\begin{adjustbox}{max width=\textwidth}
        \begin{tabular}{lcccccccccccc}
            \toprule
                        Temperature  
                        & Caltech101           & DTD                  & EuroSAT              & Aircraft             & Food101              & Flower102            & Pets                 & Cars                 & SUN397               & UCF101               & Average              \\
            \midrule
            \multicolumn{12}{c}{CLIP-ViT-B/16}                                                                                                                                                                                                                                                          \\
            \midrule
           0.5 & 93.45 & \textbf{49.13} & 53.29 & 23.97 & \textbf{87.26} & \textbf{71.82} & 88.69 & 64.66 & 66.32 & 69.68 & 66.83 \\
\rowcolor{green!20} 1.0 (selected) & \textbf{94.16} & 48.90 & \textbf{55.56} & \textbf{24.49} & 86.31 & 71.36 & \textbf{89.13} & \textbf{66.14} & \textbf{66.93} & \textbf{70.41} & \textbf{67.34} \\
2.0 & 94.07 & 48.54 & 55.19 & 24.17 & 85.87 & 71.12 & 88.69 & 65.67 & 66.07 & 70.11 & 66.95 \\
3.0 & 93.93 & 48.27 & 55.15 & 24.04 & 85.74 & 70.95 & 88.39 & 65.29 & 65.98 & 70.09 & 66.78 \\
4.0 & 93.87 & 48.16 & 55.07 & 23.96 & 85.69 & 70.93 & 88.36 & 65.21 & 65.91 & 69.95 & 66.71 \\
5.0 & 93.72 & 48.09 & 54.92 & 23.87 & 85.62 & 70.85 & 88.31 & 65.14 & 65.88 & 69.77 & 66.62 \\
            \midrule
            \multicolumn{12}{c}{CLIP-ResNet50}                                                                                                                                                                                                                              \\
            \midrule
      0.5 & \textbf{88.67} & 38.92 & 34.31 & 16.61 & \textbf{77.11} & \textbf{66.05} & \textbf{86.40} & 56.56 & 60.47 & 62.43 & 58.75 \\
\rowcolor{green!20} 1.0 (selected) & 88.46 & 41.31 & \textbf{36.84} & \textbf{16.88} & 76.88 & 65.56 & 85.69 & 56.44 & \textbf{61.28} & \textbf{63.66} & \textbf{59.30} \\
2.0 & 88.64 & 41.13 & 35.00 & 16.54 & 76.43 & 64.26 & 84.07 & \textbf{56.51} & 61.04 & 64.09 & 58.77 \\
3.0 & 88.29 & \textbf{41.41} & 32.41 & 16.50 & 76.20 & 64.31 & 83.41 & 56.35 & 60.88 & 63.70 & 58.35 \\
4.0 & 88.18 & 41.30 & 31.78 & 16.48 & 76.08 & 64.36 & 82.94 & 56.34 & 60.65 & 63.64 & 58.17 \\
5.0 & 88.07 & 41.20 & 31.14 & 16.46 & 75.96 & 64.40 & 82.46 & 56.33 & 60.64 & 63.17 & 57.98 \\
\bottomrule
        \end{tabular}
    \end{adjustbox}
\end{table}

\subsection{Results on ImageNet‘s Variants Datasets}
\label{Asec:imagenet}

We also evaluate the performance of our method across Tiny-ImageNet and its variant datasets (ImageNet-A, ImageNet-R, ImageNet-Sketch, and ImageNet-V2), as shown in Tab.~\ref{tab:image}. The improvements on ImageNet and its variants datasets are smaller compared to those observed on the fine-grained datasets (shown in Tab.~\ref{tab:fine}), for the following reasons. First, frequency bias is likely more pronounced in ImageNet and its variants.
Given our use of a relatively small batch size of $512$ and the exclusion of larger datasets such as LAION-400M for debiasing, the skewed class distribution may have negatively impacted the results.
Second, the quality of the template pool plays a crucial role in model performance.
According to \citep{allingham2023simple}, the template pool was constructed by combining templates from 10 fine-grained datasets and 6 ImageNet and its variants datasets. Fine-grained datasets benefit more from the pool, as they can exploit class-specific templates. In contrast, the more diverse categories in ImageNet and its variants find less relevant information in the fine-grained templates, deriving less benefit from these templates.
This mismatch reduces our method's effectiveness on ImageNet datasets, as it depends on template-provided information.
These limitations suggest that mitigating frequency bias and enhancing template relevance for broader datasets could further improve CARPRT’s performance.

\begin{table}[!t]
    \caption{Accuracy (\%) comparison between baselines and our method on ImageNet and its variants using CLIP-ViT-B/16 and CLIP-ResNet50 backbones. \textbf{Bold} value represents the highest accuracy on each column. Standard deviations are shown inline using $\pm$.}
    \centering
    \footnotesize
    \begin{adjustbox}{max width=\textwidth}
    \begin{tabular}{lccccccc}
        \toprule
                                & ImageNet & Tiny-ImageNet & -A & -R & -Sketch & -V2 & Average \\
        \midrule
        \multicolumn{8}{c}{CLIP-ViT-B/16} \\
        \midrule
        MPE     & 67.59 & 62.12 & 49.35 & 77.33 & 46.92 & 61.37 & 60.51 \\
        WPE              & 68.28{\scriptsize$\pm$0.01} & 62.19{\scriptsize$\pm$0.05} & 50.07{\scriptsize$\pm$0.12} & 77.25{\scriptsize$\pm$0.03} & 47.14{\scriptsize$\pm$0.02} & 61.81{\scriptsize$\pm$0.11} & 61.12{\scriptsize$\pm$0.06} \\
        \rowcolor{green!20} CARPRT (Ours)           & \textbf{68.59}{\scriptsize$\pm$0.01} & \textbf{62.71}{\scriptsize$\pm$0.04} & \textbf{51.60}{\scriptsize$\pm$0.07} & \textbf{77.48}{\scriptsize$\pm$0.04} & \textbf{47.53}{\scriptsize$\pm$0.02} & \textbf{62.11}{\scriptsize$\pm$0.09} & \textbf{61.67}{\scriptsize$\pm$0.05} \\
        \midrule
        \multicolumn{8}{c}{CLIP-ResNet50} \\
        \midrule
        MPE     & 59.12 & 43.32 & 46.25 & 69.05 & 39.05 & 54.05 & 53.50 \\
        WPE              & 59.78{\scriptsize$\pm$0.01} & 43.12{\scriptsize$\pm$0.08} & \textbf{46.37}{\scriptsize$\pm$0.08} & 69.27{\scriptsize$\pm$0.01} & 39.14{\scriptsize$\pm$0.07} & 54.07{\scriptsize$\pm$0.09} & 53.72{\scriptsize$\pm$0.06} \\
        \rowcolor{green!20} CARPRT (Ours)           & \textbf{59.98}{\scriptsize$\pm$0.02} & \textbf{43.45}{\scriptsize$\pm$0.06} & 46.19{\scriptsize$\pm$0.09} & \textbf{69.59}{\scriptsize$\pm$0.01} & \textbf{39.34}{\scriptsize$\pm$0.04} & \textbf{54.26}{\scriptsize$\pm$0.03} & \textbf{53.90}{\scriptsize$\pm$0.06} \\
        \bottomrule
    \end{tabular}
    \end{adjustbox}
    \label{tab:image}
\end{table}

\subsection{Experiments on Imbalanced Datasets}
\label{Asec:imbalanced_performance}

In this section, we evaluate the performance of CARPRT on datasets with class imbalances. Following \citet{cao2019learning}, we manually construct an imbalanced CIFAR-10 \citep{krizhevsky2009learning} dataset using an exponential decay strategy to create various degrees of class imbalance. We use an imbalance factor $\beta$ to describe the severity of the long-tailed distribution, defined as the ratio between the number of training samples in the most frequent class and the least frequent class. Specifically, $\beta$ is given by:

\[
    \beta = \frac{N_{\max}}{N_{\min}},
\]

where $N_{\max}$ and $N_{\min}$ represent the number of training samples in the most frequent and least frequent classes, respectively. We conduct experiments with different imbalance ratios, setting $\beta = 10$, $\beta = 50$, and $\beta = 100$, using the CLIP-ViT-B/16 backbone.

\begin{table}[t]
    \centering
    \caption{Accuracy (\%) comparison between our method and baselines on CIFAR-10 using the CLIP-ViT-B/16 backbone. \textbf{Bold} values represent the highest accuracy in each column.}
    \begin{tabular}{lcccc}
        \toprule
                          & Balanced Datasets & $\beta = 10$   & $\beta = 50$   & $\beta = 100$  \\
        \midrule
        MPE          & 89.56             & 89.58          & 89.57          & 89.56          \\
        WPE               & 89.55             & 90.02          & 90.78          & 91.07          \\
        \rowcolor{green!20} CARPRT (Ours)          & \textbf{90.82}    & \textbf{91.07} & \textbf{91.36} & \textbf{91.70} \\
        \bottomrule
    \end{tabular}
    \label{tab:imbalance}
\end{table}

The results shown in Tab.~\ref{tab:imbalance} demonstrate that CARPRT significantly outperforms the average baseline for all degrees of class imbalance. Specifically, CARPRT provides a consistent improvement in performance over WPE, though the gain decreases as the imbalance factor $\beta$ increases. This decreasing gain may be attributed to the global nature of the WPE weight estimation, which remains effective even under a higher imbalance. WPE calculates a single weight for the entire dataset, capturing the overall distribution and maintaining reasonable performance, even when certain classes are underrepresented.

In contrast, CARPRT uses a per-class weighting strategy, which allows better adaptation to individual class characteristics, which is highly effective in balanced or moderately imbalanced settings. However, when the class imbalance becomes severe, the challenge arises for classes with very few samples (e.g., only 10 samples). In these cases, the reliability of CARPRT's weight estimates decreases as a result of insufficient data, impacting performance.

\subsection{Impact of Template Quality}
\label{Asec:template}

In this section, we investigate the impact of template quality on ImageNet classification tasks. Specifically, we explore how different prompt template pools influence performance by evaluating two newly generated template pools alongside the original templates on the ImageNet datasets.  Specifically, Pool1 was generated using Claude 3.5 \citep{claude2024} to produce 300 templates tailored to the ImageNet label space. Each category in Pool1 consists of 100 prompt templates structured in descriptive formats, such as \textit{"A photo of a {}", "A {} photo of a {}", "The {} type of {}"}. These templates aim to incorporate task-specific context and improve the alignment between the prompts and ImageNet categories. Pool2, on the other hand, was constructed using Phi 3.1 \citep{phi2024} to create highly descriptive templates. For each ImageNet category, Phi 3.1 generated five detailed prompts, resulting in a total of 5,000 templates across all categories. These templates focus on providing class-specific descriptive information, enabling a more precise and nuanced interaction with the underlying vision-language model. These additional template pools were evaluated on ImageNet dataset compared to the original templates (Pool0), as shown in Tab.~\ref{tab:template_quality}.

\begin{table}[h]
    \centering
    \caption{Accuracy (\%) comparison across different template pools using WPE and CARPRT methods on ImageNet classification.}
    \label{tab:template_quality}
    \begin{adjustbox}{max width=\textwidth}
        \begin{tabular}{lccc}
            \toprule
            \textbf{Pool}          & \textbf{Method} & \textbf{ImageNet Acc. (\%)} & \textbf{Perf. Comparison} \\
            \midrule
            \multirow{2}{*}{Pool0} & WPE             & 68.28                       & --                        \\
                                   & CARPRT          & \textbf{68.59}              & +0.31                     \\
            \midrule
            \multirow{2}{*}{Pool1} & WPE             & 68.35                       & --                        \\
                                   & CARPRT          & \textbf{68.61}              & +0.26                    \\
            \midrule
            \multirow{2}{*}{Pool2} & WPE             & 68.34                       & --                        \\
                                   & CARPRT          & \textbf{68.97}              & +0.63                     \\
            \bottomrule
        \end{tabular}
    \end{adjustbox}
\end{table}

Pool1 targets more task-specific information by generating templates with respect to the ImageNet label space. This leads to performance improvements for both WPE and CARPRT prompt reweighting strategies compared to Pool0. On the other hand, the generated templates in Pool2 incorporate more class-specific descriptive information. CARPRT benefits significantly from these templates, achieving greater performance gains compared to WPE. This highlights the effectiveness of class-aware prompt reweighting in leveraging descriptive templates.

\textbf{Future Work}.
Results in App.~\ref{Asec:template} show that a high-quality prompt template pool significantly improves performance. Building on these results and the previously discussed limitations, a key direction for future work is enhancing the quality and diversity of the prompt template pool, which existing methods often overlook.
Future research could focus on cost-effective strategies for generating and evaluating diverse, representative prompts. This may include developing metrics to assess how well prompts capture class-specific characteristics and enhancing inter-class distinctions to improve the model’s ability to differentiate closely related categories.

\subsection{Analysis of Frequency Bias Correction}
\label{Asec:freq}

\begin{table}[!t]
\centering
\caption{Comparison of normalization schemes under WPE and CARPRT. Accuracy (\%) is reported on Fine-Grained, ImageNet, and Variant subsets, along with the average across them.}
\begin{tabular}{lcccc c}
\toprule
Method & Normalization Schemes & Fine-Grained & ImageNet & Variant & Average \\
\midrule
\multirow{4}{*}{WPE}    
       & none       & 64.82        & 68.28    & 59.69   & 64.26   \\
       & test       & 64.93        & 68.45    & 59.72   & 64.37   \\
       & pre-train  & \textbf{65.01} & \textbf{68.64} & 59.57   & 64.41   \\
       & both       & 65.00        & 68.56    & \textbf{59.74} & \textbf{64.43} \\
\midrule
\multirow{4}{*}{CARPRT} 
       & none       & 67.34        & 68.59    & 60.39   & 65.44   \\
       & test       & 67.12        & 68.27    & 60.18   & 65.19   \\
       & pre-train  & \textbf{67.45} & 68.72    & \textbf{60.55} & 65.57 \\
       & both       & 67.44        & \textbf{68.77} & 60.53   & \textbf{65.58} \\
\bottomrule
\end{tabular}
\label{tab:freq}
\end{table}

To correct potential biases introduced by the class frequency distribution in the pre-training or test-time datasets, \citet{allingham2023simple} applies normalization to the score matrix before computing the prompt weights. This step ensures that the scale and distribution of class-prompt scores are consistent across categories and prompts, thereby mitigating dataset-specific artifacts that could affect final predictions. The scores $ s_{j,i,c} $ across all images $ \vx_j $ predicted to class $ y_c $ under prompt $p_i$ are normalized as follows:
\begin{equation}
\label{label:noram}
\tilde{s}_{j,i,c} = s_{j,i,c} - \mu, 
\end{equation}
where $\mu$ denotes the mean of the scores, computed differently depending on the normalization scheme: (1) \textbf{none}: No normalization is applied and we set $\mu =0$; (2) \textbf{test}: $\mu$ is computed by the test data scores: $\mu=\mu^\mathrm{test}=\frac{1}{N^\mathrm{test}} \sum\nolimits_{j = 1}^{N^\mathrm{test}} s_{j,i,c}$; (3) \textbf{pre-train}:  $\mu$ is computed by the data drawn from LAION-400m \citep{schuhmann2021laion}, following \citet{allingham2023simple}: $\mu=\mu^\mathrm{pre}=\frac{1}{N^\mathrm{pre}} \sum\nolimits_{j = 1}^{N^\mathrm{pre}} s_{j,i,c}$; (4) \textbf{both}: Combine the two sources by interpolation:$\mu=(\mu^\mathrm{test}+\mu^\mathrm{pre})/2$.
These normalized scores are then used to compute prompt weights.

As shown in Tab.~\ref{tab:freq}, the WPE method benefits noticeably from normalization. All normalization schemes improve over the unnormalized baseline, with the \texttt{both} setting achieving the best overall performance. This suggests that WPE is sensitive to distributional bias and gains from explicitly correcting both pre-training and test-time frequency effects.

By contrast, CARPRT performs robustly across all settings. Even without normalization, CARPRT outperforms WPE, and gains only slight improvements from applying \texttt{pre-train} or \texttt{both} normalization. Interestingly, \texttt{test}-only normalization slightly reduces performance, indicating that test-derived statistics may inject noise rather than correct meaningful bias. This robustness likely stems from the class-aware formulation of CARPRT, which captures prompt-class dependencies more explicitly.

In summary, while WPE requires normalization to mitigate its reliance on biased score distributions, CARPRT consistently maintains strong performance, demonstrating its effectiveness as a prompt reweighting method.

\section{Discussion of Prior Distribution of the Prompt Weights $\Pr(\rmW | \sP)$}
\label{app:prior}

We extend the discussion of the proposed probabilistic interpretation (Sec.~\ref{sec:theore}) to the weights prior $\Pr(\rmW | \sP)$.
In the current zero-shot classification scenario addressed by CARPRT, there is no optimization-based process for ``estimating'' the weights, and as such, the weight prior $\Pr(\rmW | \sP)$ does not play a role in the methodology.
Nevertheless, our probabilistic framework is flexible enough to accommodate more general trainable settings, such as active learning and few-shot estimation, where the probabilistic formulation becomes particularly beneficial.
In these cases, a discussion of the weight prior would provide valuable insights and contribute to a more complete understanding of the framework's advantages.

Suppose there is a label space $\gY$ with size $|\gY| = C$.
Let $\sP = \{ p_i \}_{i=1}^{n}$ be a pool of $n$ independent prompt templates.
Let $\rmW = \{ \rmW_c  \}_{c=1}^{C}$ be our weight matrix.
Recall that $\rmW_c \in \Delta^{n-1}$ is the $(n-1)$-dimensional probability simplex, representing the weights for class $y_c$ across all prompts.

We consider three choices of priors: uniform prior, global Dirichlet prior, and class-specific Dirichlet priors.

\textbf{Uniform Prior}.
The uniform prior assumes all valid weight configurations are equally likely a priori.
\begin{equation*}
    p(\rmW | \sP) = \begin{cases}
        \frac{1}{|\gW|} & \text{if } \rmW \in \gW \\
        0               & \text{otherwise}
    \end{cases}
\end{equation*}
where $\gW = \{\rmW \in \sR^{n \times C} : W_c \in \Delta^{n-1} \text{ for all } c \in \{1, ..., C\}\}$.

The uniform prior is the easiest setup to implement and does not introduce bias towards any particular weight configuration.
However, the uniform prior does not leverage any prior knowledge about the prompts, which is prone to overfitting with limited data (when adapted to trainable setting).

\textbf{Global Dirichlet Prior}.
This defines a single Dirichlet distribution over all weights, treating them as a single vector.
\begin{equation*}
    p(\rmW | \sP) = \text{Dir}(\text{vec}(\rmW) | \alpha_1, ..., \alpha_{nC})
\end{equation*}
where $\text{vec}(\rmW)$ is the vectorization of $\rmW$, and $\alpha_i > 0$ are concentration parameters of the Dirichlet distribution.

Compared to uniform prior, Dirichlet prior can encode varying degrees of certainty about different weights.
Moreover, it is conjugate to multinomial likelihood, allowing for closed-form posterior updates for certain model setup.
This can also align with WPE-like class-shared-weighting strategies.
However, it ignores the class structure and treats all weights as part of a single distribution, potentially missing class-specific patterns.

\textbf{Class-specific Dirichlet Prior}.
This strategy sets an independent Dirichlet distribution for each class's weight, and stacks a product of $C$ classes' Dirichlet distributions.
\begin{equation*}
    p(\rmW | \sP) = \prod_{c=1}^C \text{Dir}(\rmW_c | \alpha_{c,1}, ..., \alpha_{c,n})
\end{equation*}
where $\alpha_{c, i} > 0$ are class and prompt-specific contenration parameters.

Currently, this setup best suits our class-aware prompt reweighting mechanism, as it allows for different prior beliefs about weight distributions for each class, class-specific modeling.
Compared with global Dirichlet, it reduces dimensionality - each Dirichlet distribution is over $n$ parameters, not $n \times C$ anymore.
More importantly, it aligns with the per-class simplex constraint of the weight space.

\textbf{Entropy Analysis}.
Different prior choices lead to different entropy results.
The uniform prior has an associated entropy as
\begin{equation*}
    H[p(\rmW | \sP)]_{\text{uniform}} = \log |\gW|,
\end{equation*}
where $|\gW|$ is the volume of the weight space.

As for global Dirichlet prior, we have
\begin{equation*}
    H[p(\rmW | \sP)] = \log B(\alpha) + (\alpha_0 - nC)\psi(\alpha_0) - \sum_{i=1}^{nC} (\alpha_i - 1)\psi(\alpha_i),
\end{equation*}
where $B(\cdot)$ is the multivariate beta function, and $\psi(\cdot)$ is the digamma function.

The entropy for class-specific Dirichlet priors is
\begin{equation*}
    H[p(\rmW | \sP)] = \sum_{c=1}^C (\log B(\alpha_c) + (\alpha_{c,0} - n)\psi(\alpha_{c,0}) - \sum_{i=1}^n (\alpha_{c,i} - 1)\psi(\alpha_{c,i})),
\end{equation*}
where $\alpha_c = (\alpha_{c,1}, ..., \alpha_{c,n})$ and $\alpha_{c,0} = \sum_{i=1}^n \alpha_{c,i}$ for each class $c$.

When we are setting the equal concentration parameters, such that $\alpha_i = \alpha$ for all $i$ in the global Dirichlet, and $\alpha_{c,i} = \alpha$ for all $c,i$ in the class-specific Dirichlets, and let $\alpha=1$, the uniform prior has the highest entropy (uninformative), while the class-specific Dirichlets having the lowest entropy.
This is because the class-specific Dirichlets with $\alpha = 1$ are equivalent to independent uniform distributions over smaller simplices, further concentrating the probability.

\section{Detailed Proofs}
\label{app:proof}

\begin{lemma}[Relative Likelihood {\it cf.} Lemma~\ref{lemma:rel_llh}]
    The likelihood of an image $\vx$, given class $c$, prompt weights $\rmW$ and a prompt pool $\sP$, following the EBM defined in \eqref{eq:ebm}, is proportional to:
    \begin{equation}
        \Pr(\vx_j | y_c, \rmW, \sP) \propto \exp \left\{ \text{sim}(\vz^{\rm I}_j, \vz^{\rm T}_c) \right\} \propto \exp \left\{ \sum_{i = 1}^{n} (w_{i, c} \, \vz^{\rm T}_{i, c})^{\top} \cdot \vz^{\rm I} \right\},
    \end{equation}
    where $\vz^{\rm I}_j = f(\vx_j)$ and $\vz_{i, c}^{\rm T} = g(p_i(y_c))$ are image embeddings of sample $\vx_j$ and text embeddings of class $y_c$ under prompt $p_i$, respectively.
\end{lemma}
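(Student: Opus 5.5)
The plan is to derive the two proportionalities in turn, using only the EBM definition in \eqref{eq:ebm} and the construction of the weighted class embedding in \eqref{eq:cpw_prelim}. Throughout, ``$\propto$'' is read as proportionality in $\vx_j$ for a fixed triple $(y_c,\rmW,\sP)$, i.e.\ up to factors that do not depend on the image.

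For the first proportionality, I start from \eqref{eq:ebm}, which reads $\Pr(\vx_j\mid y_c,\rmW,\sP)=Z(y_c,\rmW,\sP)^{-1}\exp\{\mathrm{sim}(\vz^{\rm I}_j,\vz^{\rm T}_c)\}$. The partition function $Z(y_c,\rmW,\sP)=\int_{\gX}\exp(\mathrm{sim}(f(\vx),\vz^{\rm T}_c))\,{\rm d}\vx$ integrates out the image, so it depends only on $(y_c,\rmW,\sP)$. It is therefore a constant with respect to $\vx_j$, and the first proportionality follows at once. I would also check that $Z$ is finite: sim is bounded in $[-1,1]$ for normalized embeddings, so $Z$ is finite whenever $\gX$ has finite measure, or under any base measure making the integral finite. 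I would state this finiteness as an assumption of the EBM rather than prove it.

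For the second proportionality, I substitute the weighted class embedding. By \eqref{eq:cpw_prelim}, the $c$-th column gives $\vz^{\rm T}_c=\frac{1}{n}\sum_{i=1}^n w_{i,c}\,\vz^{\rm T}_{i,c}$. Under the assumption ${\rm sim}(\va,\vb)=\va^\top\vb$, linearity of the inner product yields
\begin{equation*}
\mathrm{sim}(\vz^{\rm I}_j,\vz^{\rm T}_c)=\frac{1}{n}\sum_{i=1}^n (w_{i,c}\vz^{\rm T}_{i,c})^\top\vz^{\rm I}_j .
\end{equation*}
Exponentiating gives $\exp\{\mathrm{sim}\}=\bigl(\exp\{\sum_i (w_{i,c}\vz^{\rm T}_{i,c})^\top\vz^{\rm I}_j\}\bigr)^{1/n}$.

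The main obstacle is exactly this $1/n$ factor, together with the fact that an $\ell_2$-renormalization of $\vz^{\rm T}_c$ would add a class-dependent scale. Neither is a multiplicative constant, so strictly the relation is a proportionality between energies, $\log\Pr\propto\sum_i(w_{i,c}\vz^{\rm T}_{i,c})^\top\vz^{\rm I}$. I would handle this in one of two ways:

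\begin{enumerate}
\item \emph{Rescaled energy.} Absorb $1/n$ (and any normalizing norm $\|\sum_i w_{i,c}\vz^{\rm T}_{i,c}\|$) into a temperature of the EBM. This is legitimate because \eqref{eq:ebm} is defined only up to the choice of negative energy, and $n$ is common to all classes. After rescaling, the $\exp$ expression matches the stated one exactly.
\item \emph{Unnormalized convention.} Interpret $\vz^{\rm T}_c$ as the unnormalized weighted sum $\sum_i w_{i,c}\vz^{\rm T}_{i,c}$, using the scale convention under which the lemma is stated. Then equality holds directly, and the normalization $Z$ changes accordingly but remains independent of $\vx_j$.
\end{enumerate}

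I would adopt the first option and note explicitly that, for classification, the relevant relative likelihoods across classes are preserved because the rescaling is shared across classes.
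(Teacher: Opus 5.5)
Your proposal is correct and shares the paper's skeleton (discard $Z$, then write the similarity as a linear combination of per-prompt inner products). The difference is in how you handle the key scaling step. The paper never invokes \eqref{eq:cpw_prelim}. It postulates that the ensembled similarity is an arbitrary affine aggregator $h_c(\{s_i\})=\sum_i\alpha_{i,c}s_i+\beta_c$ of the per-prompt similarities, pulls out $e^{\beta_c}$ as a constant, and then simply declares $w_{i,c}=\alpha_{i,c}$. That renaming silently absorbs your $1/n$ (and any renormalization constant) into the weights, so it is essentially your option 2. The price is that the $w_{i,c}$ in the lemma become free coefficients rather than the simplex weights of Problem~\ref{pro:pe}: with the ensembled embedding built literally as in \eqref{eq:cpw_prelim}, the coefficients are $w_{i,c}/n$ and sum to $1/n$.

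Your route derives the linear form from the actual construction via bilinearity of the inner product. It also makes explicit what the paper's reparametrization hides: with \eqref{eq:cpw_prelim} and \eqref{eq:ebm} both taken literally, the stated chain fails for $n>1$. Indeed, with $S=\sum_i(w_{i,c}\vz^{\rm T}_{i,c})^\top\vz^{\rm I}_j$, the functions $\exp\{S/n\}$ and $\exp\{S\}$ are not proportional in $\vx_j$, so some convention is unavoidable. Your option 1 keeps the simplex weights but changes the energy scale. What it proves is $\Pr(\vx_j\mid y_c,\rmW,\sP)\propto\exp\{S\}$ with the middle link $\exp\{\mathrm{sim}\}$ replaced by $\exp\{n\,\mathrm{sim}\}$, and you should state this explicitly.

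Your reading of $\propto$ (in $\vx_j$, for fixed $(y_c,\rmW,\sP)$) is also the one that genuinely licenses dropping $Z$. The paper's justification via cross-class comparison does not, since $Z(y_c,\rmW,\sP)$ depends on $y_c$, as does its discarded factor $e^{\beta_c}$.

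One caveat: your closing remark overreaches. A temperature shared across classes preserves only the \emph{ranking} of the unnormalized class scores (hence the argmax in \eqref{eq:predict}), not the relative likelihoods. Moreover, each $Z(y_c,\cdot)$ changes differently under the rescaling, so not even the ordering of the normalized likelihoods across classes is guaranteed. Absorbing the norm $\|\sum_i w_{i,c}\vz^{\rm T}_{i,c}\|$ is a class-dependent temperature, which is not shared and can change even the argmax. None of this affects the lemma itself, which is a within-class proportionality. You should either drop that remark or restrict it to the $1/n$ factor and to the argmax over unnormalized scores. For the renormalized-embedding case, option 2 (the paper's reparametrization) is the cleaner fix.
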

\begin{proof}
    \textbf{Similarity as Negative Energy}.
    As with~\citep{lecun2006tutorial}, a general form of EBMs is given by ${P_{\theta }(x)=\exp(-\beta E_{\theta }(x))/Z(\theta )}$, which enables us to define unnormalized energy function with a partition function for normalization.
    Therefore, in our zero-shot classification context, we define the energy function with respect to the score function of the CLIP.
    \begin{equation*}
        E(\vx_j, y_c, \rmW, \sP) = \text{sim}(\vz^{\rm I}_j, \vx^{\rm T}_{c})
    \end{equation*}
    This score function measures the compatibility between the image embedding $\vz^{\rm I}_j$ and the text embedding embedding $\vx^{\rm T}_{c}$ of class $y_c$.
    higher compatibility corresponds to lower energy, aligning with the EBM principle that more likely configurations (of model) have lower energy.

    \textbf{Intractable Partition Function}.
    Computing the partition function is intractable since we need to marginalize over the image space.
    However, what we care about is the relative relation between $\Pr(\vx_j | y_c, \rmW, \sP)$ and $\Pr(\vx_j | y_{c^\prime}, \rmW, \sP)$, we can safely drop off the partition function in our relative likelihood.

    \textbf{Similarity Computation}.
    Consider a general linear combination of similarities for a prompt ensemble:
    \begin{equation*}
        \begin{aligned}
            \text{sim}(\vz^{\rm I}, \vz^{\rm T}_c) & = h_c \left( \{ \text{sim} (\vz^{\rm I}, \vz^{\rm T}_{i, c}) \}_{i=1}^{n} \right) \\
            h_c( \{ s_i \}_{i=1}^{n}  )            & = \sum_{i=1}^{n} \alpha_{i, c} s_i + \beta_c
        \end{aligned}
    \end{equation*}
    where $h_c: \sR^d \to \sR$ is a function that linearly combines the similarities over all prompts $p_i \in \sP$ for a specific class $y_c$.
    $\alpha_{i, c} \in \sR$ and $\beta_c \in \sR$ are weights and bias terms.
    Substituting $s_i = \text{sim}(\vz^{\rm I}, \vz^{\rm T}_{i, c}) = \vz^{\rm T\top}_{i, c} \cdot \vz^{\rm I}$, we get:
    \begin{equation*}
        \text{sim}(\vz^{\rm I}_j, \vz^{\rm T}_{i, c}) = \sum_{i=1}^{n} \alpha_{i, c} (\vz^{\rm T}_{i, c})^\top \cdot \vz^{\rm I}_j + \beta_c
    \end{equation*}
    We can then absorb the bias term $\beta_c$ into the exponential function,
    \begin{equation*}
        \begin{aligned}
            \Pr(\vx_j | y_c, \rmW, \sP) & \propto \exp(\text{sim}(\vz^{\rm I}_j, \vz^{\rm T}_{i, c}))                                         \\
                                        & = \exp( \sum_{i=1}^{n} \alpha_{i, c} (\vz^{\rm T}_{i, c})^\top \cdot \vz^{\rm I}_j + \beta_c )      \\
                                        & = \exp(\beta_c) \exp( \sum_{i=1}^{n} \alpha_{i, c} (\vz^{\rm T}_{i, c})^\top \cdot \vz^{\rm I}_j  ) \\
                                        & \propto \exp( \sum_{i=1}^{n} ( \alpha_{i, c} \vz^{\rm T}_{i, c})^\top \cdot \vz^{\rm I}_j  ).
        \end{aligned}
    \end{equation*}
    By setting $w_{i, c} = \alpha_{i, c}$, we arrive at the formulation in Lemma~\ref{lemma:rel_llh}.
\end{proof}

\begin{Proposition}[{\it cf.} Proposition~\ref{prop:representability}]
    Let $\gX$ be the image space, $\gY$ be the class space.
    Given a set of prompts $\sP$, for any prompt weighting scheme $S$ ({\it cf.}~\eqref{eq:cpw_prelim}), define the representable likelihood set $\gF_{S}$ as:
    \begin{equation*}
        \gF_S = \left\{ f : \gX \times \gY \to \sR_+ | \exists \rmW \in \gW_S, \sP, \text{ s.t. } f(\vx, y_c) \propto \Pr(\vx | y_c, \rmW, \sP) \right\},
    \end{equation*}
    where $\gW_S$ is the weight space under the scheme $S$.
    Let $\gF_{\text{CI}}$ and $\gF_{\text{CS}}$ be the representable likelihood set induced from class-independent weighting and class-aware weighting ({\it cf.}  \eqref{eq:cpw_prelim}) schemes.
    Then, we have: $\exists f^{*} \in \gF_{\text{CS}}$ such that $\forall f_{\text{CI}} \in \gF_{\text{CI}}, \exists \vx \in \gX, y_c \in \gY$ where $f^{*}(\vx, y_c) \neq f_{\text{CI}}(\vx, y_c)$.
\end{Proposition}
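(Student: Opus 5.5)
The plan is to build an explicit witness $f^{*}$ from the class-specific family and show that no class-independent choice of weights reproduces it. By Lemma~\ref{lemma:rel_llh}, every $f\in\gF_S$ has the form
\[
f(\vx,y_c)=\kappa\,\exp\Bigl\{\sum_{i=1}^{n} w_{i,c}\,(\vz^{\rm T}_{i,c})^{\top}\vz^{\rm I}\Bigr\}
\]
up to the normalizing constant. For $\gF_{\rm CI}$ we have $w_{i,c}=w_i$ for all $c$. The inclusion $\gF_{\rm CI}\subseteq\gF_{\rm CS}$ is immediate, because a class-independent $\rmW$ is a special case of a class-specific one. Everything therefore reduces to the separation claim.

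Because proportionality is the relevant notion, I compare log-likelihood ratios across classes. For $\vx$ and two classes $c,c'$, set
\[
\Delta_f(\vx;c,c')=\log f(\vx,y_c)-\log f(\vx,y_{c'}) .
\]
This quantity is invariant to the per-$\vx$ normalizing constant. It is affine-linear in $\vz^{\rm I}$, with direction
\[
u_{c,c'}(\rmW)=\sum_i w_{i,c}\vz^{\rm T}_{i,c}-\sum_i w_{i,c'}\vz^{\rm T}_{i,c'} .
\]
Concretely, I take $n=2$ and $C=2$. The proposition lets us choose $\sP$, and the choice used is
\[
\vz^{\rm T}_{1,1}=\vz^{\rm T}_{2,2}=e_1,\qquad \vz^{\rm T}_{2,1}=\vz^{\rm T}_{1,2}=e_2 .
\]
Under class-independent weights $(w,1-w)$, the direction is $u_{1,2}=(2w-1)(e_1-e_2)$. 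Its $e_1$- and $e_2$-coefficients are always negatives of each other.

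For the witness I pick the class-specific weights $\rmW_1=\rmW_2=(1,0)$, that is, prompt 1 for both classes. This gives $u^{*}_{1,2}=e_1-e_2$, which is still antisymmetric. To break the pattern, I instead choose the embeddings so that the class-independent directions span only a one-dimensional set, for example by adding a shared component. The cleaner general argument counts dimensions. Under CI the map $w\mapsto u_{1,2}$ is affine and lies in the affine span $\{\sum_i w_i(\vz^{\rm T}_{i,1}-\vz^{\rm T}_{i,2})\}$. Under CS, $u_{1,2}$ ranges over the Minkowski difference of two simplices' convex hulls. By choosing the $\vz^{\rm T}_{i,c}$ in general position, e.g. $\vz^{\rm T}_{1,1}=e_1$, $\vz^{\rm T}_{2,1}=e_2$, $\vz^{\rm T}_{1,2}=e_3$, $\vz^{\rm T}_{2,2}=e_4$, the CI set is a segment of directions $w(e_1-e_3)+(1-w)(e_2-e_4)$. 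The CS choice $\rmW_1=(1,0)$, $\rmW_2=(0,1)$ gives $e_1-e_4$, which is not on that segment, because the coefficients of $e_1$ and $e_3$ must be negatives of each other under CI.

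Finally, I convert the difference in directions into a pointwise inequality. Since $u^{*}\neq u_{\rm CI}$ for every CI choice, there is an image embedding $\vz^{\rm I}$ with $(u^{*}-u_{\rm CI})^{\top}\vz^{\rm I}\neq 0$. This gives $\Delta_{f^*}\neq\Delta_{f_{\rm CI}}$, so $f^{*}(\vx,y_c)\neq f_{\rm CI}(\vx,y_c)$ for some $c$ even after any rescaling. The main obstacle is this realizability step: that step needs the image encoder $f$ to attain a suitable unit vector $\vz^{\rm I}$, which I will simply assume is a richness condition on $f(\gX)$ (e.g., it spans the relevant coordinates). A second point to handle is the quantifier order. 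The image $\vx$ may depend on $f_{\rm CI}$, which is allowed, so this approach works as long as the correct $\vz^{\rm I}$ is available for each $f_{\rm CI}$.
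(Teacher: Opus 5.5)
Your argument is correct. At the top level it follows the paper's strategy: fix a toy configuration, exhibit explicit class-specific weights, and show by coefficient matching that no shared weight vector reproduces them. You carry it out in a different model, though.

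The paper takes three classes and two prompts and writes the likelihood as a linear mixture $f^*(\vx,y_c)=\sum_i w_{i,c}\Pr(\vx\mid y_c,p_i)$, with unnormalized weights $(2,1),(1,2),(3,3)$. It then matches the coefficients of $a_{1,c},a_{2,c}$ class by class. This is very short, but it has three weaknesses:
\begin{itemize}
\item the mixture form is not the one produced by \eqref{eq:cpw_prelim} and Lemma~\ref{lemma:rel_llh};
\item its weights lie outside the simplex;
\item its step ``must hold for any $a_{i,c}>0$'' silently assumes the prompt-conditional likelihoods are linearly independent functions of $\vx$.
\end{itemize}

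You instead stay inside the log-linear form $\exp\{\sum_i w_{i,c}(\vz^{\rm T}_{i,c})^\top\vz^{\rm I}\}$ that the paper's own framework prescribes. You use simplex-valid weights and remove the proportionality constant by passing to log-odds. You also make the genericity explicit: orthonormal text embeddings, so that $e_1-e_4$ is not even in the linear span of $e_1-e_3$ and $e_2-e_4$, plus a richness condition on $f(\gX)$. The cost is that these encoder-realizability assumptions must be stated, whereas the paper leaves its analogous assumption implicit.

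Two details to tighten.

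First, suppose you keep the partition function $Z(y_c,\rmW,\sP)$ of \eqref{eq:ebm}, which depends on both the class and the weights. Then $\Delta_{f^*}-\Delta_{f_{\rm CI}}=(u^*-u_{\rm CI})^\top\vz^{\rm I}+b$ for some constant $b$. A single $\vz^{\rm I}$ with $(u^*-u_{\rm CI})^\top\vz^{\rm I}\neq 0$ is then not enough; you need this functional to be non-constant on $f(\gX)$. The right assumption is that the projection of $f(\gX)$ onto $\mathrm{span}\{e_1,\dots,e_4\}$ does not lie in any affine hyperplane. Your linear spanning condition does not give this, since points can span linearly while lying on an affine hyperplane off the origin.

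Second, drop the first symmetric configuration from the write-up. There, class-specific and class-independent weights give exactly the same set of log-odds directions $\{t(e_1-e_2):t\in[-1,1]\}$. Your log-odds criterion therefore cannot separate the two families in that configuration.
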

\begin{proof}
    We prove this by constructing a specific function in $\gF_{\text{CS}}$ and showing it cannot be represented by any function in $\gF_{\text{CI}}$.
    For simplicity, we consider a \textbf{toy} setting with three classes $\gY = \{y_1, y_2, y_3\}$ and two prompts $\sP = \{p_1, p_2\}$.
    For any $\vx \in \gX$, the function under class-aware weighting for $\forall \, y_c \in \{ y_1, y_2, y_3 \}$ takes the form:
    \begin{equation*}
        \begin{aligned}
            f^*(\vx, y_c) & = \sum_{i=1}^{|\sP|} w_{i,c}\Pr(\vx | y_c, p_i)            \\
                          & = w_{1,c}\Pr(\vx | y_c, p_1) + w_{2,c}\Pr(\vx | y_c, p_2).
        \end{aligned}
    \end{equation*}
    where $w_{i,j} \in \sR_+$ are class-aware weights for prompt $i$ and class $j$.
    For ease of notation, we denote the prompt-conditional likelihood by $a_{i, c} \triangleq \Pr(\vx | y_c, p_i)$.
    This way $f^* \in \gF_{\text{CS}}$ can be expressed as
    \begin{equation*}
        \begin{aligned}
            f^*(\vx, y_1) & = w_{1,1} a_{1, 1} + w_{2,1} a_{2, 1} \\
            f^*(\vx, y_2) & = w_{1,2} a_{1, 2} + w_{2,2} a_{2, 2} \\
            f^*(\vx, y_3) & = w_{1,3} a_{1, 3} + w_{2,3} a_{2, 3}
        \end{aligned}
    \end{equation*}
    We then consider a specific instance\footnote{unnormalized weights, just for illustration} of this function by choosing:
    \begin{equation*}
        \begin{aligned}
            w_{1,1} & = 2, & w_{2,1} & = 1 \\
            w_{1,2} & = 1, & w_{2,2} & = 2 \\
            w_{1,3} & = 3, & w_{2,3} & = 3
        \end{aligned}
    \end{equation*}
    This leads to
    \begin{equation*}
        \begin{aligned}
            f^*(\vx, y_1) & = 2a_{1,1} + a_{2,1}  \\
            f^*(\vx, y_2) & = a_{1,2} + 2a_{2,2}  \\
            f^*(\vx, y_3) & = 3a_{1,3} + 3a_{2,3}
        \end{aligned}
    \end{equation*}
    Now, suppose for contradiction that $\exists f_{\text{CI}} \in \gF_{\text{CI}}$ such that $f^* = f_{\text{CI}}$.
    By definition of $\gF_{\text{CI}}$, $f_{\text{CI}}$ takes the form $f_{\text{CI}}(\vx, y_c) = w_1a_{1,c} + w_2a_{2,c}$, where $w_1, w_2 \in \sR_+$ are class-independent weights.

    If $f^* = f_{\text{CI}}$, then for all classes $y_c \in \{ y_1, y_2, y_3 \}$, we must have the following equations to hold simultaneously:
    \begin{equation*}
        \begin{aligned}
            2a_{1, 1} + a_{2, 1}  & = w_1 a_{1, 1} + w_2 a_{2, 1} & \text{(for } y_1 \text{)} \\
            a_{1, 2} + a_{2, 2}   & = w_1 a_{1, 2} + w_2 a_{2, 2} & \text{(for } y_2 \text{)} \\
            3a_{1, 3} + 3a_{2, 3} & = w_1 a_{1, 3} + w_2 a_{2, 3} & \text{(for } y_3 \text{)}
        \end{aligned}
    \end{equation*}

    From these equations, we can deduce that
    \begin{equation*}
        \begin{aligned}
            w_1 = 2 & \text{ and } w_2 = 1 \text{ must hold for any } a_{1, 1}, a_{2, 1} > 0 & \text{(for } y_1 \text{)} \\
            w_1 = 1 & \text{ and } w_2 = 2 \text{ must hold for any } a_{1, 2}, a_{2, 2} > 0 & \text{(for } y_2 \text{)} \\
            w_1 = 3 & \text{ and } w_2 = 3 \text{ must hold for any } a_{1, 3}, a_{2, 3} > 0 & \text{(for } y_1 \text{)} \\
        \end{aligned}
    \end{equation*}
    Thus, we need $w_1=2$ for $y_1$ while $w_1 = 1$ for $y_2$, immediately leading to a contradiction as $w_1$ cannot simultaneously equal $1$ and $2$.

    Therefore, no class-independent weighting scheme can represent the function $f^*$ we constructed.
    We have proven that $\exists f^* \in \gF_{\text{CS}}$ such that $\forall f_{\text{CI}} \in \gF_{\text{CI}}, \; \exists \vx \in \gX, y_c \in gY$ where $f^*(\vx, y_c) \neq f_{\text{CI}}(\vx, y_c)$.
\end{proof}

\section{Additional Visualizations of Prompt Weights}
\label{Asec:vis}

To provide qualitative insight into CARPRT's mechanism, we first visualize the learned class-specific prompt weights on the \textit{Caltech101} dataset. 
Fig.~\ref{fig:heatmap} shows the {\em truncated} weight matrix for a subset of prompts ($n^{\prime} < n$ columns) and classes ($C^{\prime} < C$ rows) from the full matrix $\rmW \in \sR^{n \times C}$, where clear differences in the weights assigned to the same prompt across different classes are evident. 
Fig.~\ref{fig:freq} further illustrates this class-dependency by plotting the weights of a single prompt template—``\textit{a low resolution photo of a \{\}}''—across all classes, demonstrating that the contribution of this prompt is tailored to each class. 
These visualizations corroborate our quantitative results, confirming that CARPRT prioritizes prompts differently for each class. 

In addition, we include additional visualizations of the CARPRT-generated prompt weights across all ten fine-grained datasets in the supplementary material (due to file size, these figures are not embedded in the main PDF). Each visualization is presented as a heatmap, where the vertical axis corresponds to the prompt index and the horizontal axis to the class index.

These heatmaps consistently reveal the class-specific nature of the learned weights: the columns exhibit noticeable variation across prompts rather than remaining uniform, indicating that different prompts are emphasized for different classes. Moreover, for most fine-grained datasets, only a small subset of prompts receive high weights across classes, while the majority are down-weighted—this sparsity manifests visually as a few strong horizontal lines. This trend is particularly evident on \texttt{Food101}, where the semantic homogeneity of the dataset leads to more consistent prompt preferences across classes.

Nevertheless, even within \texttt{Food101}, the highest-weighted prompt still varies across classes, demonstrating that class-aware prompt weighting remains essential. These results collectively support the effectiveness of WPE \citep{allingham2023simple} in highlighting useful prompts for the dataset, while also confirming the necessity of CARPRT’s class-aware weighting to fully capture intra-dataset variation.

\begin{figure*}[!t]
    \centering
    \subfigure[Truncated Class-Prompt Weights Heatmap]{
        \label{fig:heatmap}

        \includegraphics[width=0.4\linewidth]{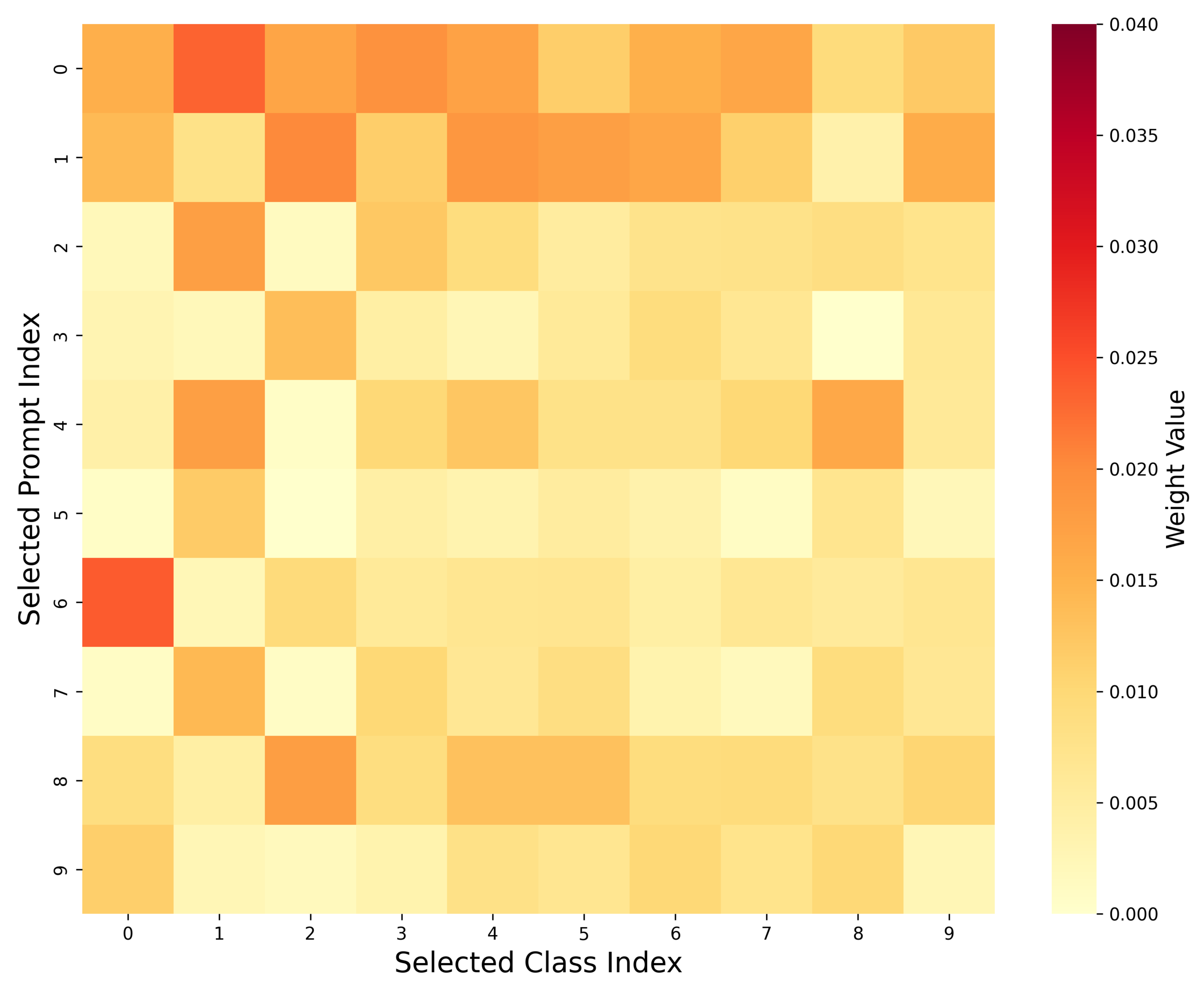}
    }~~~~
    \subfigure[Per-Class Weight Distribution for ``\textit{a low resolution photo of a \{\}.}”]{
        \label{fig:freq}
        \includegraphics[width=0.4\linewidth]{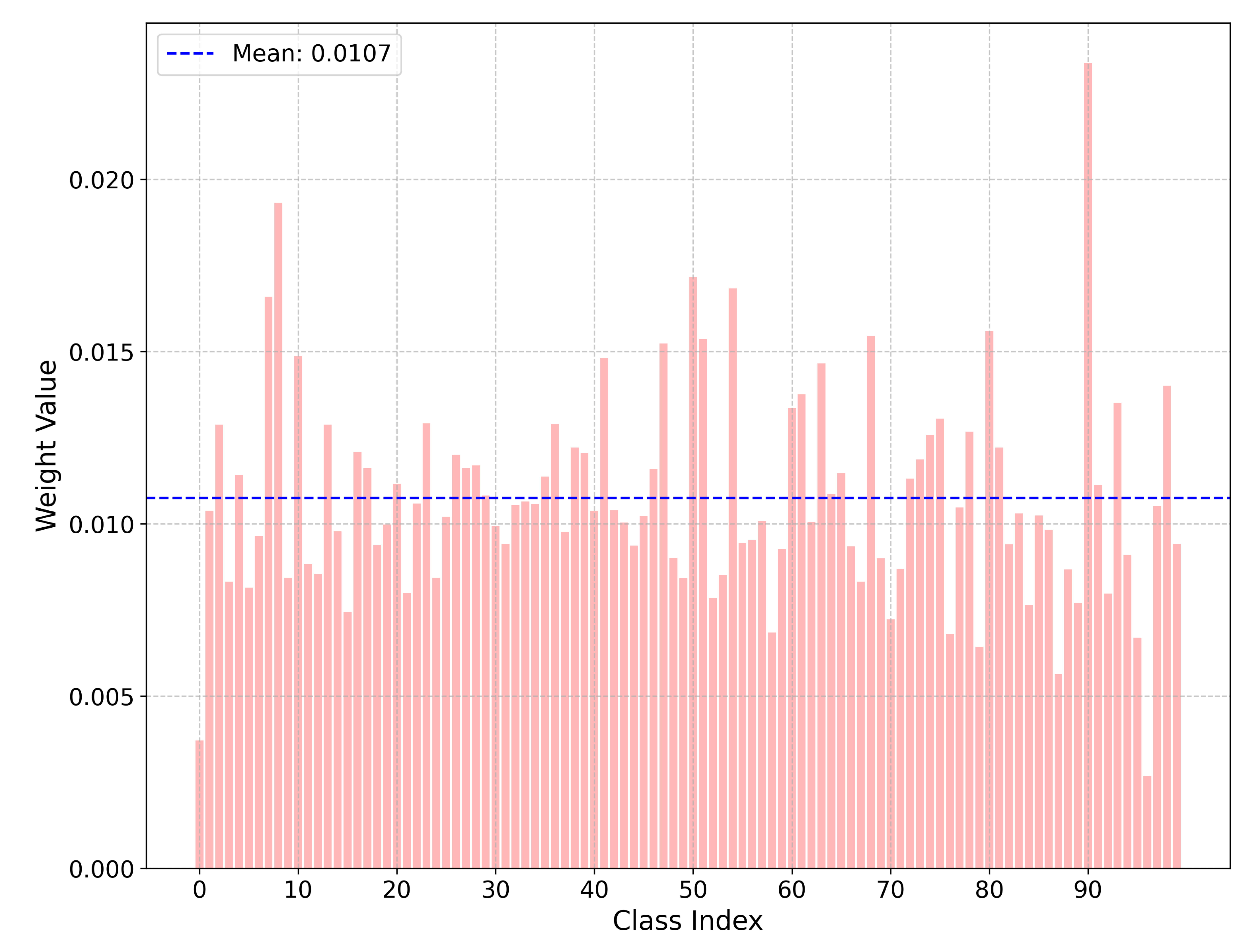}
    }
    \caption{\footnotesize  Visualization of the class-aware prompt weights estimated by CARPRT on the Caltech101 dataset.
        (a) The heatmap shows the prompt weights across a subset of classes and prompts, revealing diverse weight patterns and confirming class-specific preferences.
        (b) The bar plot displays the distribution of prompt weights assigned to the prompt ``\textit{a low resolution photo of a \{\}}” across all classes. }
    \label{fig:vis}
\end{figure*}

\section{Use of Large Language Models (LLMs)}
In preparing this submission, we used LLMs solely as writing aids to improve readability. 
Specifically, LLMs were employed to correct grammar errors and polish the text. 
No part of the scientific content—including problem formulation, method design, experiments, or analysis—is generated by LLMs. 
All technical contributions and claims were conceived, implemented, and evaluated by the authors.

\begin{table*}[!t]
    \caption{\footnotesize Accuracy (\%) comparison between baselines and our method \% on various fine-grained classification datasets using CLIP and DeCLIP backbones. \textbf{Bold} values indicate the highest accuracy, while \underline{underlined} values represent the second highest in each column.
            {\color{gray}
                \textsuperscript{*}
                ``Human Selection'' uses handcrafted prompts recommended by CLIP authors and introduces external knowledge.
                Results are not directly comparable to automated methods.}
    }
    \centering
    \begin{adjustbox}{max width=\textwidth}
        \begin{tabular}{lccccccccccccc}
            \toprule
                                               & Caltech101           & DTD                  & EuroSAT              & Aircraft             & Food101              & Flower102            & Pets                 & Cars                 & SUN397               & UCF101               & ImageNet             & Average              \\
            \midrule
            \multicolumn{13}{c}{CLIP-ViT-B/16}                                                                                                                                                                                                                                                              \\
            \midrule
            MPE                                & 92.50                & 46.88                & 51.86                & 21.49                & 85.34                & 64.21                & 79.46                & 65.21                & 64.92                & 67.41                & 67.59                & 64.26                \\
            Majority Vote                      & \underline{93.10}    & 46.75                & \underline{52.07}    & 22.93                & 85.60                & \underline{67.20}    & 81.27                & 64.93                & 65.75                & 68.30                & 67.98                & 65.08                \\
            WPE                                & 93.09                & \underline{47.04}    & 49.60                & \underline{23.28}    & \underline{86.14}    & 66.60                & \underline{82.38}    & \underline{65.93}    & \underline{65.77}    & \underline{68.33}    & \underline{68.28}    & \underline{65.13}    \\
            \rowcolor{green!20}
            CARPRT (Ours)                      & \textbf{94.16}       & \textbf{48.90}       & \textbf{55.56}       & \textbf{24.49}       & \textbf{86.31}       & \textbf{71.36}       & \textbf{89.13}       & \textbf{66.14}       & \textbf{66.93}       & \textbf{70.41}       & \textbf{68.59}       & \textbf{67.45}       \\
            \cmidrule(lr){1-13}
            \rowcolor{gray!10}
            \rowfont{gray}
            Human Selection\textsuperscript{*} & \rowfont{gray} 92.94 & \rowfont{gray} 44.39 & \rowfont{gray} 47.60 & \rowfont{gray} 24.72 & \rowfont{gray} 86.06 & \rowfont{gray} 71.23 & \rowfont{gray} 88.91 & \rowfont{gray} 65.32 & \rowfont{gray} 62.50 & \rowfont{gray} 66.75 & \rowfont{gray} 68.31 & \rowfont{gray} 65.34 \\
            \midrule
            \multicolumn{13}{c}{CLIP-ResNet50}                                                                                                                                                                                                                                                              \\
            \midrule
            MPE                                & 86.41                & 41.69                & 30.34                & 16.05                & 75.53                & 56.95                & 75.98                & 55.74                & 59.32                & 60.06                & 59.12                & 56.11                \\
            Majority Vote                      & \underline{86.79}    & \textbf{42.14}       & 28.86                & \underline{16.29}    & 76.00                & \underline{60.06}    & 77.29                & 56.01                & \underline{60.40}    & 60.87                & 59.24                & 56.72                \\
            WPE                                & 86.65                & 40.89                & \underline{30.65}    & 16.11                & \underline{76.15}    & 58.82                & \underline{78.43}    & \underline{56.02}    & 59.71                & \underline{61.53}    & \underline{59.78}    & \underline{56.79}    \\
            \rowcolor{green!20}
            CARPRT (Ours)                      & \textbf{88.46}       & \underline{41.31}    & \textbf{36.84}       & \textbf{16.88}       & \textbf{76.88}       & \textbf{65.56}       & \textbf{85.69}       & \textbf{56.44}       & \textbf{61.28}       & \textbf{63.66}       & \textbf{59.98}       & \textbf{59.36}       \\
            \cmidrule(lr){1-13}
            \rowcolor{gray!10}
            \rowfont{gray} Human Selection\textsuperscript{*} & \rowfont{gray} 86.29                & \rowfont{gray} 40.32                & \rowfont{gray} 29.56                & \rowfont{gray} 17.28                & \rowfont{gray} 75.31                & \rowfont{gray} 66.14                & \rowfont{gray} 85.77                & \rowfont{gray} 55.61                & \rowfont{gray} 58.52                & \rowfont{gray} 61.46                & \rowfont{gray} 59.71                & \rowfont{gray} 57.82                \\
            \midrule
            \multicolumn{13}{c}{DeCLIP-ViT-B/32}                                                                                                                                                                                                                                                            \\
            \midrule
            MPE                                & 94.04                & \underline{41.63}    & \underline{28.05}    & 7.10                 & 71.71                & 77.76                & 76.75                & \underline{52.22}    & 62.08                & 57.87                & 67.01                & 57.84                \\
            Majority Vote                      & \underline{94.26}    & 40.29                & 27.68                & \underline{7.70}     & 72.34                & 78.19                & 77.75                & 51.87                & 62.86                & 58.20                & 67.24                & 58.03                \\
            WPE                                & 94.08                & 40.97                & 27.92                & 7.54                 & \underline{73.15}    & \underline{81.32}    & \underline{80.92}    & 52.21                & \underline{63.23}    & \underline{58.91}    & \underline{67.97}    & \underline{58.93}    \\
            \rowcolor{green!20}
            CARPRT (Ours)                      & \textbf{94.37}       & \textbf{43.31}       & \textbf{33.14}       & \textbf{8.76}        & \textbf{74.15}       & \textbf{82.42}       & \textbf{83.28}       & \textbf{52.23}       & \textbf{64.12}       & \textbf{59.57}       & \textbf{68.08}       & \textbf{60.31}       \\
            \cmidrule(lr){1-13}
            \rowcolor{gray!10}
            \rowfont{gray} Human Selection\textsuperscript{*} & \rowfont{gray} 93.97                & \rowfont{gray} 42.55                & \rowfont{gray} 30.07                & \rowfont{gray} 9.05                 & \rowfont{gray} 73.59                & \rowfont{gray} 83.41                & \rowfont{gray} 83.14                & \rowfont{gray} 50.77                & \rowfont{gray} 63.14                & \rowfont{gray} 58.70                & \rowfont{gray} 67.85                & \rowfont{gray} 59.66                \\

            \bottomrule
        \end{tabular}
    \end{adjustbox}
    \label{tab:fine}
\end{table*}

\begin{table}[t]
\centering
\small
\begin{tabular}{lccc}
\toprule

 & \textbf{CLIP-ViT} & \textbf{CLIP-ResNet} & \textbf{DeCLIP-ViT} \\
\midrule
MPE & 64.26 & 56.11 & 57.84 \\
Majority Vote & 65.08 & 56.72 & 58.03 \\
WPE & 65.13 & 56.79 & 58.93 \\
\midrule
\rowcolor{blue!10}
CARPRT (Ours) & \textbf{67.45} & \textbf{59.36} & \textbf{60.31} \\
\midrule
\rowcolor{blue!10}\textit{Gain vs WPE} & \textbf{+2.32 }& \textbf{+2.57 }& \textbf{+1.38} \\
\bottomrule
\end{tabular}
\end{table}

\begin{tabular}{lccc}
\toprule
\textbf{Method} & \textbf{Original} & \cellcolor{blue!10}\textbf{CARPRT+} & \cellcolor{blue!10}\textbf{Gain} \\
\midrule
\makecell[l]{TDA \\ \emph{test-time adaptation}}  
& 67.53 & \cellcolor{blue!10}\textbf{68.02} & \cellcolor{blue!10}+0.49 \\

\makecell[l]{CuPL \\ \emph{LLM-empowered prompt augmentation}}       
& 67.01 & \cellcolor{blue!10}\textbf{67.91} & \cellcolor{blue!10}+0.90 \\

\makecell[l]{ProDA \\ \emph{soft prompt tuning}}     
& 78.23 & \cellcolor{blue!10}\textbf{80.21} & \cellcolor{blue!10}+1.98 \\
\bottomrule
\end{tabular}

\end{document}